\newtheorem{theorem}{Theorem}[section]
\newtheorem{definition}[theorem]{Definition}
\newtheorem{lem}[theorem]{Lemma}
\renewcommand{\D}{\mathcal{D}}
\newcommand{\A}{\mathcal{A}}
\newcommand{\1}{\mathds{1}}
\newcommand{\X}{\mathrm{X}}
\newcommand{\M}{Monta{\~{n}}ez}
\newcommand{\g}{\phi}
\newcommand{\E}{\mathbb{E}}
\newcommand{\N}{\mathbb{N}}
\newcommand{\dif}{\text{d}}
\newcommand{\bias}{\mathrm{Bias}}
\newcommand{\gp}{general probability of success}
\newcommand{\GP}{General Probability of Success}
\newcommand{\ep}{expected per-query probability of success}
\newcommand{\EP}{Expected Per-Query Probability of Success}
\newcommand{\dd}{decomposable}
\newcommand{\Dd}{Decomposable}
\newcommand{\dy}{decomposability}
\newcommand{\Dy}{Decomposability}
\newcommand{\dm}{decomposable probability-of-success metric}
\newcommand{\floor}[1]{\lfloor #1 \rfloor}
\newcommand{\GG}[1]{}
\numberwithin{equation}{section}
\begin{document}

\title{\Dd{} Probability-of-Success Metrics in Algorithmic Search}
\ifx
\author{\authorname{Author 1\sup{1}\orcidAuthor{0000-0000-0000-0000}, Author 2\sup{2}\orcidAuthor{0000-0000-0000-0000}, Author 3\sup{3}\orcidAuthor{0000-0000-0000-0000},Author 4\sup{1}\orcidAuthor{0000-0000-0000-0000}, and Author 5\sup{1}\orcidAuthor{0000-0000-0000-0000}}
\affiliation{\sup{1} University 1}
\affiliation{\sup{2} University 2}
\affiliation{\sup{3} University 3}
\email{\{author\}@email.edu}
}
\fi

\author{\authorname{Tyler Sam\sup{1}\orcidAuthor{0000-0001-7974-3226}, Jake Williams\sup{1}\orcidAuthor{0000-0001-9714-1851}, Abel Tadesse\sup{2}\orcidAuthor{0000-0002-3337-9454}, Huey Sun\sup{3}\orcidAuthor{0000-0002-0949-3169}, and George \M\sup{1}\orcidAuthor{0000-0002-1333-4611}}
\affiliation{\sup{1} Harvey Mudd College, California, USA}
\affiliation{\sup{2} Claremont McKenna College, California, USA}
\affiliation{\sup{3} Pomona College, California, USA}
\email{\{tsam, jlwilliams, gmontanez\}@g.hmc.edu, aleulseged20@cmc.edu, hssa2016@pomona.edu}
}

\keywords{Decomposable Probability-of-Success Metric, Machine Learning as Search, Algorithmic Search Framework}

\abstract{Previous studies have used a specific success metric within an algorithmic search framework to prove machine learning impossibility results. However, this specific success metric prevents us from applying these results on other forms of machine learning, e.g. transfer learning. We define \dd{} metrics as a category of success metrics for search problems which can be expressed as a linear operation on a probability distribution to solve this issue. Using an arbitrary \dd{} metric to measure the success of a search, we demonstrate theorems which bound success in various ways, generalizing several existing results in the literature.}

\onecolumn \maketitle \normalsize \setcounter{footnote}{0} \vfill

\section{\uppercase{Introduction}}
\label{sec:introduction}

\noindent Many machine learning tasks, such as classification, regression and clustering, can be reduced to search problems~\cite{montanez2017machine}. Through this reduction, one can apply concepts from information theory to derive results about machine learning. To compare the success of different algorithms, or the expected probability of finding a desired element, \M{} defined a metric of success that averaged the  probability of success over all iterations of an algorithm~\cite{montanez2017machine}. While this metric has many applications, it is not appropriate for cases where the probability of success for a given iteration of an algorithm is required. An example of this is transfer learning, where the probability of success at the final step of the algorithm is more relevant than the average probability of success. 

Building on this work, we define \dy{} as a property of probability-of-success metrics and show that the \ep{}~\cite{montanez2017machine} and more general probability of success metrics are \dd. We then show that the results previously proven for the \ep{}  hold for all \dm s. Under this generalization,we can prove results related to the probability of success for specific iterations of a search rather than just uniformly averaged over the entire search, giving the results much broader applicability.

\section{\uppercase{Related Work}}

\noindent Several decades ago, Mitchell proposed that classification could be viewed as search, and reduced the problem of learning generalizations to a search problem within a hypothesis space~\cite{mitchell80,mitchell1982generalization}. \M{} subsequently expanded this idea into a formal search framework~\cite{montanez2017machine}.

\M{} showed that for a given algorithm with a fixed information resource, favorable target sets, or the target sets on which the algorithm would perform better than uniform random sampling, are rare. He did this by proving that the proportion of $b$-bit favorable problems has an exponentially decaying restrictive bound~\cite{Montanez2016TheFO}. He further showed that this scarcity of favorable problems exists even for $k$-sparse target sets. 

\M{} et al.\ later defined bias, the degree to which an algorithm is predisposed to a fixed target, with respect to the expected per-query probability of success metric, and proved that there were a limited number of favorable information resources for a given bias~\cite{montanez2019fobfl}. Using the search framework, they proved that an algorithm cannot be favorably biased towards many distinct targets simultaneously.

As machine learning grew in prominence, researchers began to probe what was possible within machine learning. Valiant considered learnability of a task as the ability to generate a program for performing the task without explicit programming of the task~\cite{ValiantLearnability}. By restricting the tasks to a specific context, Valiant demonstrated a set of tasks which were provably learnable. 
    
Schaffer provided an early foundation to the idea of bounding universal performance of an algorithm~\cite{SchafferConservation}. Schaffer analyzed generalization performance, the ability of a learner to classify objects outside of its training set, in a classification task. Using a baseline of uniform sampling from the classifiers, he showed that, over the set of all learning situations, a learner's generalization performance sums to zero, which makes generalization performance a conserved quantity. 

Wolpert and Macready demonstrated that the historical performance of a deterministic optimization algorithm provides no \textit{a priori} justification whatsoever for its continued use over any other alternative going forward~\cite{Wolpert1997NoFL}, implying that there is no utility in rationally choosing a thus-far better algorithm over choosing the opposite. Furthermore, just as there does not exist a single algorithm that performs better than random on all possible optimization problems, they proved that there also does not exist an optimization problem on which all algorithms perform better than average.

Continuing the application of prior knowledge to learning and optimization, G{\"{u}}l{\c{c}}ehre and Bengio showed that the worse-than-chance performance of certain machine learning algorithms can be improved through learning with hints, namely, guidance using a curriculum~\cite{bengio}. So, while Wolpert's results might make certain tasks seem futile and infeasible, G{\"{u}}l{\c{c}}ehre's empirical results show that there exist some alternate means through which we can utilize prior knowledge to get better results in both learning and optimization.

Others have worked towards meaningful bounds on algorithmic success through different approaches. Sterkenburg approached this concept from the perspective of Putnam, who originally claimed that a universal learning machine is impossible through the use of a diagonalization argument~\cite{sterk}. Sterkenburg follows up on this claim, attempting to find a universal inductive rule by exploring a measure which cannot be diagonalized. Even when attempting to evade Putnam's original diagonalization, Sterkenburg is able to apply a new diagonalization that reinforces Putnam's original claim of the impossibility of a universal learning machine. 

There has also been work on proving learning bounds for specific problems. Kumagai and Kanamori analyzed the theoretical bounds of parameter transfer algorithms and self-taught learning~\cite{KumagaiBounds}. By looking at the local stability, or the degree to which a feature is affected by shifting parameters, they developed a definition for parameter transfer learnability, which describes the probability of effective transfer.

\subsection{Distinctions from Prior Work}
\noindent The \ep{} metric previously defined in the algorithmic search framework \cite{montanez2017machine} tells us, for a given information resource, algorithm, and target set, how often (in expectation) our algorithm will successfully locate elements of the target set. While this metric is useful when making general claims about the performance of an algorithm or the favorability of an algorithm and information resource to the target set, it lacks the specificity to make claims about similar performance and favorability on a per-iteration basis. This trade-off calls for a more general metric that can be used to make both general and specific (per iteration) claims. For instance, in transfer learning tasks, the performance and favorability of the last pre-transfer iteration is more relevant than the overall \ep. The \gp, which we will define as a particular \dm{}, is a tool through which we can make claims at such specific and relevant steps.

\section{\uppercase{Background}}
\noindent In this section, we will present definitions for the main framework that we will use throughout this paper. 

\subsection{The Search Framework}
\M{} describes a framework which formalizes search problems in order to analyze search and learning algorithms~\cite{Montanez2016TheFO}. There are three components to a search problem. The first is the finite discrete search space, $\Omega$, which is the set of elements to be examined. Next is the target set, $T$, which is a nonempty subset of the search space that we are trying to find. Finally, we have an external information resource, $F$, which provides an evaluation of elements of the search space. Typically, there is a tight relationship between the target set and the external information resource, as the resource is expected to lead to or describe the target set in some way, such as the target set being elements which meet a certain threshold under the external information resource.

Within the framework, we have an iterative algorithm which seeks to find elements of the target set, shown in Figure \ref{fig:ml-as-search}. The algorithm is a black-box that has access to a search history and produces a probability distribution over the search space. At each step, the algorithm samples over the search space using the probability distribution, evaluates that element using the information resource, adds the result to the search history, and determines the next probability distribution. The abstraction of finding the next probability distribution as a black-box algorithm allows the search framework to work with all types of search problems.

\begin{figure}[h]
    \centering
    \includegraphics[scale=0.4]{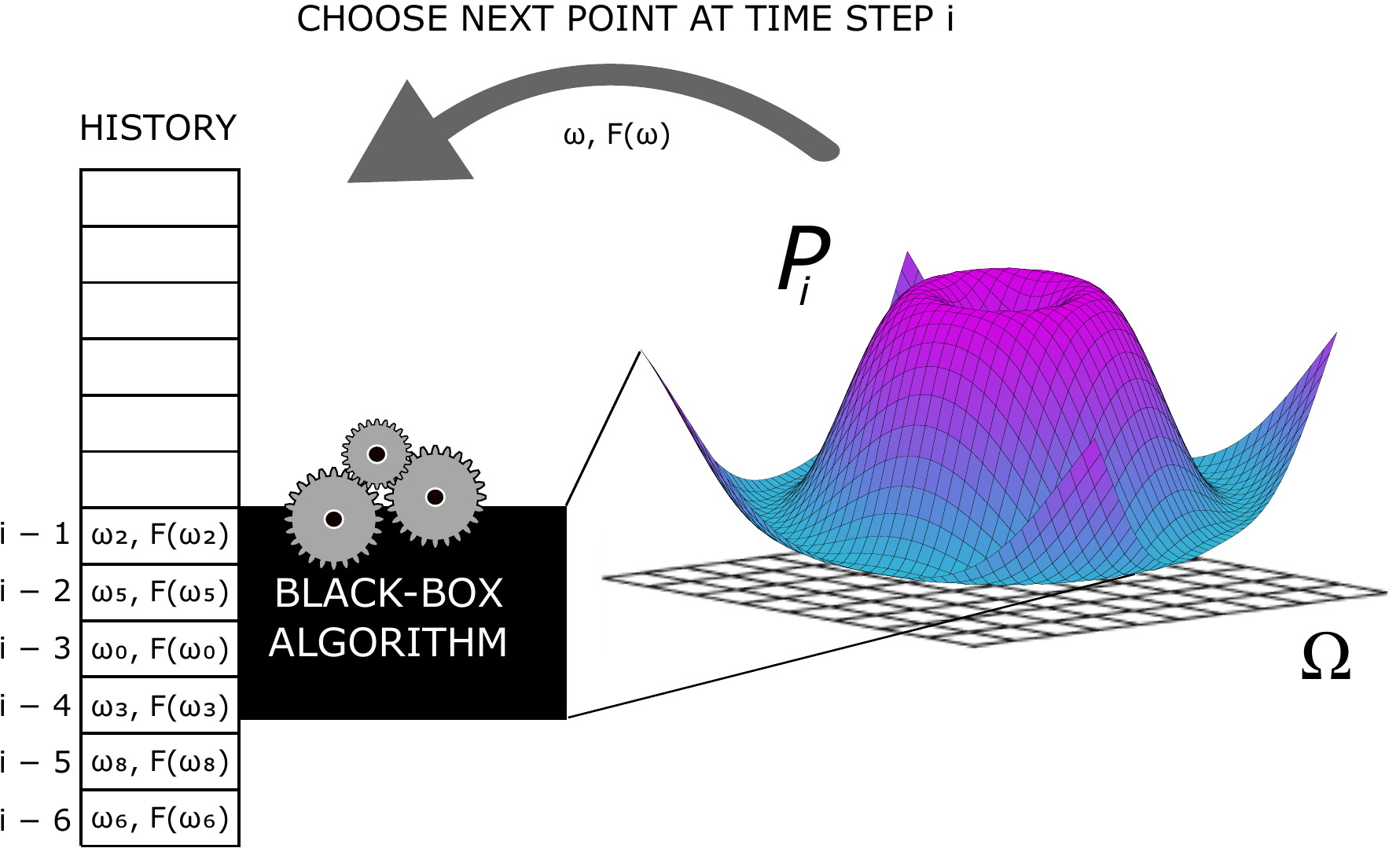}
    \caption{Black-box search algorithm. We iteratively populate the history with samples from a distribution that is determined by the black-box at each iteration, using the history \cite{montanez2017machine}.}
    \label{fig:ml-as-search}
\end{figure}

The ML-as-search framework is valuable because it provides a structure to understand and reason about different machine learning problems within the same formalism. For example, we can understand regression as a search through a space of possible regression functions, or parameter estimation as a search through possible vectors for a black-box process \cite{montanez2017machine}. Therefore, we can apply results about search problems to any machine learning problem we can cast into the search framework.

\subsection{Expected Per-Query Probability of Success}
\noindent In order to compare search algorithms, \M{} defined the \ep,
\begin{equation}
\small
  q(t, f) = \E_{\tilde{P}, H}\left [\frac{1}{|\tilde{P}|}\sum_{i=1}^{|\tilde{P}|}P_i(w \in t) \bigg| f \right] = \overline{P}(X \in t|f)
\end{equation}
where $\tilde{P}$ is the sequence of probability distributions generated by the black box, $H$ is the search history, and $t$ and $f$ are the target set and information resource of the search problem, respectively~\cite{Montanez2016TheFO}. This metric of success is particularly useful because it can be shown that $q(t, f) = \textbf{t}^{\top}\overline{\bf P}_f$, where $\overline{ \bf P}_f$ is the average of the vector representation of the probability distribution from the search algorithm at each step, conditioned on an information resource $f$. 

Measuring success using the \ep, \M{} demonstrated bounds on the success of any search algorithm~\cite{Montanez2016TheFO}. The Famine of Forte states that for a given algorithm, the proportion of target-information resource pairs yielding a success level above a given threshold is inversely related to the threshold. Thus, the greater the threshold for success, the fewer problems you can be successful on, regardless of the algorithm. The \ep{} can also be used to prove a version of the No Free Lunch theorems, demonstrating that all algorithms perform the same averaged over all target sets and information resources, as is done in Theorem~\ref{thm:NFL} of the current manuscript.

\subsection{Bias}

\noindent Using the search framework, \M{}  defined a measure of bias between a distribution over information resources and a fixed target~\cite{montanez2019fobfl}. For a distribution $\D$ over a collection of possible information resources $\mathcal{F}$, with $F \sim \D$, and a fixed $k$-hot\footnote{$k$-hot vectors are binary vectors of length $|\Omega|$ with exactly $k$ ones.} target \textbf{t}, the bias between the distribution and the target is defined as
\begin{align}
\text{Bias}(\D, \textbf{t}) &= \mathbb{E}_{\D}[\textbf{t}^{\top}\overline{\bf P}_F] - \frac{k}{|\Omega|} \\
&= \textbf{t}^{\top}\mathbb{E}_{\D}[\overline{\bf P}_F] - \frac{\|\textbf{t}\|^2}{|\Omega|}\\
&= \textbf{t}^{\top}\int_{\mathcal{F}} \overline{\bf P}_f \D(f) \text{d}f - \frac{\|\textbf{t}\|^2}{|\Omega|}.
\end{align}
Recall from above that $\overline{\bf P}_f$ was the averaged probability distribution over $\Omega$ from a search.  

The bias term measures the performance of an algorithm in expectation (over a given distribution of information resources) compared to uniform sampling. Mathematically, this is computed by taking the difference between the expected value of the average performance of an algorithm and the performance of uniform sampling. The distribution $\D$ captures what information resources (e.g., datasets) one is likely to encounter.

For a non-mathematical example of the effect of bias, suppose we are searching for parking space within a parking lot. If we randomly choose parking spaces to check, we are searching without bias. However, if we consider the location of the parking spaces, we may find that parking spaces furthest from the entrance are usually free, and could find an open parking space with a higher probability. Here, the information resource telling us the distance of each parking space from the entrance and our belief that parking spaces further from the entrance tend to be open creates a distribution over possible parking spaces, favoring those that are further away for being checked first.

\section{\uppercase{Preliminaries}}
\noindent In this section, we introduce a new property of success metrics called \dy, which allows us to generalize concepts of success and bias. We provide a number of prelimimary lemmata, with full proofs given in the Appendix.

\subsection{\Dy}
\noindent We now give a formal definition for a \dm, which will be used throughout the rest of the paper.

\begin{definition}
A probability-of-success metric $\g$ is \textbf{\dd} if and only if there exists a $\mathbf{P}_{\g,f}$ such that
\begin{equation}
    \g(t,f) = \mathbf{t}^{\top}\mathbf{P}_{\g,f}=P_{\g}(X \in t|f),
\end{equation}
where $\mathbf{P}_{\g,f}$ is not a function of $t$, being conditionally independent of it given $f$.
\end{definition}
\noindent As we stated previously, what makes the \ep{} particularly useful is that it can be represented as a linear function of a probability distribution. This definition allows us to reference any probability-of-success metric having this property.

As a first example, we show that the \ep{} is a \dm.

\begin{restatable}[\Dy{} of the \EP{} ]{lem}{DEPlem}\label{lem:DEP}
The \ep{} is \dd, namely,
     \begin{align}
q(t,f) = \mathbf{t}^\top \overline{\mathbf{P}}_f.
    \end{align}
\end{restatable}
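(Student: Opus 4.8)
The plan is to unfold the definition of $q(t,f)$ and exhibit the vector $\overline{\mathbf{P}}_f$ explicitly, confirming that it carries no dependence on $t$. The starting observation is that for any single distribution $P_i$ over $\Omega$, the per-query success probability $P_i(w \in t)$ is exactly the probability mass that $P_i$ places on the target set. Writing $\mathbf{t}$ for the $k$-hot indicator vector of $t$ and $\mathbf{P}_i$ for the vector representation of $P_i$, that mass is the inner product $\mathbf{t}^\top \mathbf{P}_i = \sum_{w \in t} P_i(w)$. This identity is the key step that converts the probabilistic statement into a linear-algebraic one.

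First I would substitute this identity into the defining sum, rewriting $\frac{1}{|\tilde{P}|}\sum_{i=1}^{|\tilde{P}|} P_i(w \in t)$ as $\frac{1}{|\tilde{P}|}\sum_{i=1}^{|\tilde{P}|} \mathbf{t}^\top \mathbf{P}_i$. Since $\mathbf{t}$ is fixed across the summation index $i$, I would factor it out to obtain $\mathbf{t}^\top\left(\frac{1}{|\tilde{P}|}\sum_{i=1}^{|\tilde{P}|} \mathbf{P}_i\right)$, recognizing the parenthesized quantity as the (sequence- and history-dependent) average distribution vector of a single run.

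Next I would take the conditional expectation over the randomness in $\tilde{P}$ and $H$ given $f$. Because $\mathbf{t}$ is a deterministic vector determined by the target set and not by the search dynamics, linearity of expectation lets me pull $\mathbf{t}^\top$ outside the expectation, yielding $q(t,f) = \mathbf{t}^\top \E_{\tilde{P},H}\left[\frac{1}{|\tilde{P}|}\sum_{i=1}^{|\tilde{P}|}\mathbf{P}_i \bigg| f\right]$. I would then simply define $\overline{\mathbf{P}}_f$ to be this conditional expectation, which coincides with the averaged probability distribution over $\Omega$ already named in the bias section of the background.

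The only point needing care—rather than a genuine obstacle—is verifying the conditional-independence clause of the definition: that $\overline{\mathbf{P}}_f$ is not a function of $t$. This holds because the distributions $\mathbf{P}_i$ are produced by the black-box algorithm from the search history, which depends only on the evaluations supplied by $f$ and never on $t$ directly; hence the expectation conditioned on $f$ retains no residual dependence on the target. With that confirmed, the displayed equation $q(t,f) = \mathbf{t}^\top\overline{\mathbf{P}}_f$ exhibits $q$ in precisely the required form $\mathbf{t}^\top \mathbf{P}_{\g,f}$, completing the argument.
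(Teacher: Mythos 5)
Your proposal is correct and follows essentially the same route as the paper, which simply asserts $q(t,f) = \overline{P}(X \in t|f) = \mathbf{t}^\top \overline{\mathbf{P}}_f$ ``by definition,'' relying on the identity already stated in the background. You have merely written out the unfolding of that definition (the inner-product rewriting, factoring out $\mathbf{t}^\top$, and linearity of expectation) that the paper leaves implicit, and your check that $\overline{\mathbf{P}}_f$ carries no dependence on $t$ is a welcome, if brief, addition.
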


\noindent Our goal is to show that the theorems proved for the \ep{} hold for all \dd{} metrics. Showing that the \ep{} is \dd{} suggests that these theorems may be generalizable to any metrics sharing that property.

\subsubsection{The General Probability of Success}
\noindent While the \ep{} averages the probability of success over each of the queries in a search history, we may care more about a specific query in the search history, e.g., the final query of a sequence. Thus, we can generalize the \ep{} by replacing the averaging with an arbitrary distribution $\alpha$ over the probability distributions in the search history. We define the \GP{} as
% Change
\begin{equation}
\small
q_{\alpha}(t, f) = \E_{\tilde{P}, H}\left [\sum_{i = 1}^{|\tilde{P}|}\alpha_iP_i(w \in t) \bigg| f \right] = P_{\alpha}(X \in t|f)  \end{equation}
where $P_\alpha$ is a valid probability distribution on the search space $\Omega$ and $\alpha_i$ is the weight allocated to the $i$th probability distribution in our sequence. This formula allows us to consider a wide variety of success metrics as being instances of the general probability of success metric. For example, the expected per-query probability of success is equivalent to setting $\mathbf{P}_{\alpha, f} = \overline{\mathbf{P}}_f$, with $\alpha_i = 1/|\tilde{P}|$. Similarly, a metric of success which only cares about the final query can be represented by letting $\mathbf{P}_{\alpha, f} = \overline{\mathbf{P}}_{n,f}$ where $n$ is the length of the sequence of queries, and $\overline{\mathbf{P}}_{n,f}$ is the average of the distributions from the $n$-th iteration of our search. 

It should be noted that $\alpha$ within the expectation will be random, being defined over the random number of steps within $\tilde{P}$. Our operative definition of the $\alpha$ distribution, however, will allow us to generate the corresponding distribution for the needed number of steps, such as when we place all mass on the $n$-th iteration of the search. With a slight abuse of notation, we thus let $\alpha$ signify both the process by which the distribution is generated as well as the particular distribution produced for a given number of steps.

As the \gp{} provides a layer of abstraction above the \ep{}, if we prove that results about the \ep{} also hold for the \gp{}, we gain a more powerful tool set. To do so, we must first demonstrate that the \gp{} is a \dm.

\begin{restatable}[\Dy{} of the \GP{} Metric]{lem}{DGPlem}\label{lem:DGP}
The \gp{} is \dd, namely, 
     \begin{align}
q_{\alpha}(t, f) = \mathbf{t}^{\top}\mathbf{P}_{\alpha,f}.
    \end{align}
\end{restatable}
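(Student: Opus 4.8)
The plan is to unfold the definition of the general probability of success and exploit the same bilinearity of the inner product $\mathbf{t}^\top \mathbf{P}_i$ that underlies the decomposability of the expected per-query metric in Lemma~\ref{lem:DEP}. The starting point is that, writing $\mathbf{P}_i$ for the vector representation over $\Omega$ of the $i$-th distribution produced by the black box, the per-query hitting probability is the linear form $P_i(w \in t) = \mathbf{t}^\top \mathbf{P}_i$, since summing the entries of $\mathbf{P}_i$ over the support of the target is exactly the inner product of $\mathbf{P}_i$ with the indicator vector $\mathbf{t}$.

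First I would substitute this linear form into the definition of $q_\alpha$, giving $q_\alpha(t,f) = \E_{\tilde{P}, H}\!\left[ \sum_{i=1}^{|\tilde{P}|} \alpha_i\, \mathbf{t}^\top \mathbf{P}_i \,\middle|\, f \right]$. Because $\mathbf{t}$ is a fixed deterministic vector that depends on neither the summation index $i$, the sequence $\tilde{P}$, the history $H$, nor the random number of steps, I can factor $\mathbf{t}^\top$ out of the finite sum and then out of the expectation by linearity. This produces $q_\alpha(t,f) = \mathbf{t}^\top \, \E_{\tilde{P}, H}\!\left[ \sum_{i=1}^{|\tilde{P}|} \alpha_i \mathbf{P}_i \,\middle|\, f \right]$, so I would simply define $\mathbf{P}_{\alpha,f} := \E_{\tilde{P}, H}\!\left[ \sum_{i} \alpha_i \mathbf{P}_i \,\middle|\, f \right]$ and read off the desired identity $q_\alpha(t,f) = \mathbf{t}^\top \mathbf{P}_{\alpha,f}$.

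It then remains to verify the two conditions in the definition of decomposability. I would check that $\mathbf{P}_{\alpha,f}$ is a genuine probability vector: each $\mathbf{P}_i$ is a distribution on $\Omega$ and the $\alpha_i$ are nonnegative weights summing to one, so every realized inner sum $\sum_i \alpha_i \mathbf{P}_i$ is a convex combination of distributions and hence itself a distribution, and the expectation of a random probability vector is again a probability vector. I would then argue that $\mathbf{P}_{\alpha,f}$ is not a function of $t$: the black box accesses the target only indirectly through the information resource $f$, so conditioned on $f$ the sequence $\tilde{P}$, the history $H$, the step count, and the induced weights $\alpha_i$ are all independent of $t$, leaving $\mathbf{P}_{\alpha,f}$ conditionally independent of $t$ given $f$ as required.

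I expect the one genuine subtlety, rather than a true obstacle, to be the randomness of $\alpha$ together with the random length $|\tilde{P}|$: as the manuscript notes, $\alpha$ is defined over the random number of steps, so the inner sum has both a random number of terms and random weights. This does not interfere with factoring out the deterministic $\mathbf{t}^\top$, but it is the point at which I would be careful to treat $\sum_i \alpha_i \mathbf{P}_i$ as a single random vector inside the expectation before pulling the constant factor out. The argument is in fact the direct generalization of the proof of Lemma~\ref{lem:DEP}, which is recovered by taking $\alpha_i = 1/|\tilde{P}|$.
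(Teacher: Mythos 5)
Your proposal is correct and follows essentially the same route as the paper's proof: both rest on interchanging the inner product with $\mathbf{t}$ (the sum over $x \in t$) with the finite weighted sum over $i$ and the expectation over $(\tilde{P},H)$ conditioned on $f$; the paper simply runs the chain of equalities in the opposite direction, starting from $P_\alpha(X \in t \mid f) = \mathbf{t}^\top\mathbf{P}_{\alpha,f}$ and arriving at $q_\alpha(t,f)$. Your additional checks that $\mathbf{P}_{\alpha,f}$ is a valid probability vector and is conditionally independent of $t$ given $f$ are left implicit in the paper but are correct and worth stating.
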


\noindent These lemmata allow us to apply later theorems about \dd{} metrics to these two useful metrics. Given a metric of interest, performing a similar proof of \dy{} will allow for the application of the subsequent theorems.

\begin{restatable}[\Dy{} closed under  expectation]{lem}{DCElem}\label{lem:DCE}
Given a set $S = \{\g_i\}$ of \dm s and a distribution $\mathcal{D}$ over $S$, it holds that
\begin{equation}
    \g'(t,f) = \E_{\mathcal{D}}[\g(t,f)]
\end{equation}
is also a \dm.
\end{restatable}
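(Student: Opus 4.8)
The plan is to reduce everything to the defining identity of decomposability together with linearity of expectation. By hypothesis every $\g_i \in S$ is decomposable, so for each there is a vector $\mathbf{P}_{\g_i,f}$, independent of $t$, with $\g_i(t,f) = \mathbf{t}^\top \mathbf{P}_{\g_i,f}$. Writing $\g$ for the random metric drawn according to $\mathcal{D}$, I would start from
\begin{equation}
\g'(t,f) = \E_{\mathcal{D}}[\g(t,f)] = \E_{\mathcal{D}}\left[\mathbf{t}^\top \mathbf{P}_{\g,f}\right].
\end{equation}
Since $\mathbf{t}$ is the fixed target vector and does not depend on the draw from $\mathcal{D}$, I can pull the (constant) factor $\mathbf{t}^\top$ outside the expectation, giving
\begin{equation}
\g'(t,f) = \mathbf{t}^\top\, \E_{\mathcal{D}}\!\left[\mathbf{P}_{\g,f}\right].
\end{equation}
This immediately suggests the candidate witness $\mathbf{P}_{\g',f} := \E_{\mathcal{D}}[\mathbf{P}_{\g,f}]$, which is manifestly not a function of $t$ (the expectation is taken only over the choice of metric, with $f$ held fixed), so the conditional-independence requirement in the definition is satisfied.

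What remains is to check that this candidate is an admissible witness, i.e., that $\mathbf{P}_{\g',f}$ is itself a valid probability distribution over $\Omega$ and hence that $\g'(t,f) = \mathbf{t}^\top \mathbf{P}_{\g',f} = P_{\g'}(X \in t \mid f)$ is a genuine probability-of-success metric. Here I would argue by convexity: each $\mathbf{P}_{\g,f}$ has nonnegative entries summing to one, and $\E_{\mathcal{D}}[\cdot]$ is an average (a convex combination) of such vectors; nonnegativity is preserved entrywise and, by linearity, $\mathbf{1}^\top \E_{\mathcal{D}}[\mathbf{P}_{\g,f}] = \E_{\mathcal{D}}[\mathbf{1}^\top \mathbf{P}_{\g,f}] = \E_{\mathcal{D}}[1] = 1$. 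Thus $\mathbf{P}_{\g',f}$ lies in the probability simplex and $\g'$ inherits the required form, completing the argument.

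I expect the one genuine subtlety, rather than the algebra, to be the justification for interchanging $\mathbf{t}^\top$ with $\E_{\mathcal{D}}$ and for the convexity step when $S$ is allowed to be an uncountable family and $\mathcal{D}$ a general distribution over it. Since $\Omega$ is finite and each coordinate of $\mathbf{P}_{\g,f}$ is bounded in $[0,1]$, the components are integrable and the exchange is licensed coordinatewise by linearity of expectation (equivalently, Fubini/dominated convergence for the finite-dimensional vector-valued integral). I would state this integrability observation explicitly so the ``convex combination'' reasoning applies verbatim in the continuous case, after which the conclusion that $\g'$ is decomposable follows with no further computation.
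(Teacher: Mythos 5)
Your proof is correct and follows essentially the same route as the paper's: invoke decomposability of each $\g_i$, pull $\mathbf{t}^\top$ out of the expectation by linearity, and take $\mathbf{P}_{\g',f} = \E_{\mathcal{D}}[\mathbf{P}_{\g,f}]$ as the witness. The only difference is that you additionally verify the witness lies in the probability simplex and address integrability for general $\mathcal{D}$, details the paper's (discrete-sum) proof omits but which only strengthen the argument.
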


\noindent Lemma~\ref{lem:DCE} gives us an easy way to construct a new \dd{} metric from a set of known \dd{} metrics. Note that not every success metric is decomposable; we can create non-decomposable success metrics by taking non-convex combinations of \dm s.

\subsection{Generalization of Bias}
\noindent Our definition of \dy{} allows us to redefine bias in terms of any \dd{} metric, $\phi(T, F)$. We replace $\overline{\textbf{P}}_F$ with $\textbf{P}_{\phi, F}$ and obtain
\begin{align}
\text{Bias}(\D, \textbf{t}) &= \mathbb{E}_{\D}[\textbf{t}^{\top}\textbf{P}_{\phi, F}] - \frac{k}{|\Omega|}. \\
&= \textbf{t}^{\top}\mathbb{E}_{\D}[\textbf{P}_{\phi, F}] - \frac{\|\textbf{t}\|^2}{|\Omega|}\\
&= \textbf{t}^{\top}\int_{\mathcal{F}} \textbf{P}_{\phi, f} \D(f) \text{d}f - \frac{\|\textbf{t}\|^2}{|\Omega|}.
\end{align}
Because $\phi(t, f)$ is \dd, it is equal to $\textbf{t}^{\top}\textbf{P}_{\phi, f}$. This makes results about the bias particularly interesting, since they relate directly to any probability-of-success metric we create, so long as the metric is \dd. 

\section{\uppercase{Results}}
\noindent \M{} proved a number of results and bounds on the success of machine learning algorithms relative to the \ep, along with its corresponding definition of bias~\cite{montanez2017machine,montanez2019fobfl}. We now generalize these to apply to any \dm, with full proofs given in the Appendix (available online).

\subsection{No Free Lunch for Search}
\noindent First, we prove a version of the No Free Lunch Theorems for any \dm{} within the search framework.
\begin{restatable}[No Free Lunch for Search and Machine Learning]{thm}{NFLthm}\label{thm:NFL}
For any pair of search/learning algorithms $\mathcal{A}_1$, $\mathcal{A}_2$ operating on discrete finite search space $\Omega$, any closed under permutation set of target sets $\uptau$, any set of information resources $\mathcal{B}$, and \dm{} $\g$,
     \begin{align}
         \sum_{t\in\uptau}\sum_{f\in\mathcal{B}} \g_{\mathcal{A}_1}(t,f) = \sum_{t\in\uptau}\sum_{f\in\mathcal{B}} \g_{\mathcal{A}_2}(t,f).
    \end{align}
\end{restatable}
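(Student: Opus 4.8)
The plan is to collapse the double sum into a quantity that has no dependence on the algorithm whatsoever, exploiting decomposability of $\g$ together with the permutation-closure of $\uptau$. First I would invoke the defining property of a \dm: for each algorithm $\mathcal{A}$ there is a probability vector $\mathbf{P}_{\g,f}^{\mathcal{A}}$, not a function of $t$, with $\g_{\mathcal{A}}(t,f) = \mathbf{t}^{\top}\mathbf{P}_{\g,f}^{\mathcal{A}}$, where $\mathbf{t}$ is the indicator vector of the target set $t$. Substituting this and exchanging the (finite) order of summation factors the target sum out of the inner product:
\begin{equation}
\sum_{t\in\uptau}\sum_{f\in\mathcal{B}} \g_{\mathcal{A}}(t,f) = \sum_{f\in\mathcal{B}} \Big(\sum_{t\in\uptau}\mathbf{t}\Big)^{\!\top}\mathbf{P}_{\g,f}^{\mathcal{A}}.
\end{equation}

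The heart of the argument is to evaluate $\sum_{t\in\uptau}\mathbf{t}$. Its $i$-th coordinate is $N_i := |\{t\in\uptau : i \in t\}|$, the number of target sets in $\uptau$ containing the element $i\in\Omega$, and I claim $N_i$ is independent of $i$. Given any $i,j$, the transposition $\sigma$ swapping $i$ and $j$ restricts to a bijection of $\uptau$ (here permutation-closure is indispensable) satisfying $i \in \sigma(t) \iff j \in t$; hence the set $\{t\in\uptau : i\in\sigma(t)\}$ equals $\{t : j\in t\}$ (size $N_j$) and is carried by $t \mapsto \sigma(t)$ onto $\{s : i\in s\}$ (size $N_i$), forcing $N_i = N_j$. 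Therefore $\sum_{t\in\uptau}\mathbf{t} = c\,\1$ for the constant $c = N_1$, with $\1$ the all-ones vector. Stating this symmetry cleanly is the step I expect to be the main obstacle; everything else is bookkeeping.

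Finally I would use that $\mathbf{P}_{\g,f}^{\mathcal{A}}$ is a genuine probability distribution over $\Omega$, which the \dd{} definition guarantees since it equals the pmf $P_{\g}(X\in t\,|\,f)$; thus $\1^{\top}\mathbf{P}_{\g,f}^{\mathcal{A}} = 1$. Combining the last two steps,
\begin{equation}
\sum_{t\in\uptau}\sum_{f\in\mathcal{B}} \g_{\mathcal{A}}(t,f) = c\sum_{f\in\mathcal{B}} \1^{\top}\mathbf{P}_{\g,f}^{\mathcal{A}} = c\,|\mathcal{B}|,
\end{equation}
which depends only on $\uptau$, $\mathcal{B}$, and $\Omega$, and not on $\mathcal{A}$. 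Applying this identity to $\mathcal{A}_1$ and to $\mathcal{A}_2$ yields the claimed equality. The only facts I lean on are decomposability (to linearize the metric in $\mathbf{t}$), the stochasticity of $\mathbf{P}_{\g,f}$, and the permutation symmetry forcing the target-sum vector to be a constant multiple of $\1$; no feature specific to the \ep{} enters, which is precisely why the result generalizes to every \dm.
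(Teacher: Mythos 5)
Your proposal is correct and follows essentially the same route as the paper's proof: decompose $\g_{\mathcal{A}}(t,f)=\mathbf{t}^{\top}\mathbf{P}_{\g,f}^{\mathcal{A}}$, pull $\sum_{t\in\uptau}\mathbf{t}=c\,\mathbf{1}$ out by permutation-closure, and use $\mathbf{1}^{\top}\mathbf{P}_{\g,f}^{\mathcal{A}}=1$ to get the algorithm-independent value $c\,|\mathcal{B}|$. The only difference is that you supply an explicit transposition argument for why $\sum_{t\in\uptau}\mathbf{t}$ is a constant vector, a step the paper simply asserts.
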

\noindent This means that performance, in terms of our \dm{}, is conserved in the sense that increased performance of one algorithm over another on some information resource-target pair comes at the cost of a loss in performance elsewhere.

\subsection{The Fraction of Favorable Targets}
\noindent \M{} proved that for a fixed information resource, a given algorithm $\mathcal{A}$ will perform favorably relative to uniform random sampling on only a few target sets, under the \ep~\cite{montanez2017machine}. We generalize this result with a \dm{} and define a version of active information of expectations for decomposable metrics $I_{\g(t, f)} := -\log_2\frac{p}{\g(t, f)}$. This transforms the ratio of success probabilities into bits where $p = |t|/|\Omega|$, the per-query probability of success for uniform random sampling with replacement. $I_{\g(t,f)}$ denotes the advantage $\mathcal{A}$ has over uniform random sampling with replacement, in bits. 

\begin{restatable}[The Fraction of Favorable Targets]{thm}{FOFT}\label{thm:FOFT}
Let $\uptau = \{t \mid t \subseteq \Omega \}$, $\uptau_{b} = \{t \mid \emptyset \not= t \subseteq \Omega, I_{\g(t,f)} \geq b \}$, and \dm{} $\g$. Then for $b \geq 3$,
\begin{align}
    \frac{|\uptau_{b}|}{|\uptau|} \leq 2^{-b}.
\end{align}
\end{restatable}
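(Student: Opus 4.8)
The plan is to exploit \dy{} to reduce the statement to a clean application of Markov's inequality, stratified by target size. First I would invoke the hypothesis that $\g$ is \dd{} to write $\g(t,f) = \mathbf{t}^\top \mathbf{P}_{\g,f} = \sum_{i \in t} (\mathbf{P}_{\g,f})_i$, and observe that $\mathbf{P}_{\g,f}$ is a genuine probability vector: its entries are nonnegative and, taking $t = \Omega$, they sum to $\g(\Omega,f) = P_{\g}(X \in \Omega \mid f) = 1$. Next I would unwind the active-information threshold: since $I_{\g(t,f)} = -\log_2\!\big(p/\g(t,f)\big)$ with $p = |t|/|\Omega|$, the defining condition $I_{\g(t,f)} \geq b$ of $\uptau_b$ is equivalent to
\begin{equation}
\g(t,f) \;\geq\; 2^{b}\,\frac{|t|}{|\Omega|}.
\end{equation}

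The key idea is that the normalizing quantity $p$ depends only on the size of $t$, so I would partition $\uptau_b$ according to $k = |t|$. Writing $n = |\Omega|$ and drawing a target $T$ uniformly at random from the $\binom{n}{k}$ subsets of size $k$, each element lies in $T$ with probability $k/n$, hence
\begin{equation}
\E\big[\g(T,f)\big] \;=\; \sum_{i} \P[i \in T]\,(\mathbf{P}_{\g,f})_i \;=\; \frac{k}{n}\sum_i (\mathbf{P}_{\g,f})_i \;=\; \frac{k}{n} \;=\; p.
\end{equation}
Since $\g(T,f) \geq 0$, Markov's inequality applied at level $2^b p$ gives $\P[\g(T,f) \geq 2^b p] \leq \E[\g(T,f)]/(2^b p) = 2^{-b}$. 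In counting terms, at most a $2^{-b}$ fraction of the size-$k$ targets lie in $\uptau_b$, i.e.\ there are at most $2^{-b}\binom{n}{k}$ of them.

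Finally I would sum over sizes. Because $\uptau_b$ contains only nonempty targets,
\begin{equation}
|\uptau_b| \;\leq\; \sum_{k=1}^{n} 2^{-b}\binom{n}{k} \;=\; 2^{-b}\,(2^{n}-1) \;\leq\; 2^{-b}\,2^{n} \;=\; 2^{-b}\,|\uptau|,
\end{equation}
using $|\uptau| = 2^{n}$, and dividing through yields $|\uptau_b|/|\uptau| \leq 2^{-b}$.

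The step I expect to be the main obstacle is the normalization by $|t|$ hidden inside $p$: a single Markov bound over all $2^n$ subsets does not apply directly, because the favorability condition compares $\g(t,f)$ against the size-dependent baseline $2^b|t|/|\Omega|$, a ratio of two quantities that both vary with $t$. Stratifying by $k$ is precisely what removes this difficulty, since within a size class the baseline is the constant $2^b p$ and the expected metric value is exactly $p$, so the size-dependence cancels and the per-class bound is uniformly $2^{-b}$. I would note that this route actually delivers $|\uptau_b|/|\uptau| \leq 2^{-b}(1-2^{-n}) < 2^{-b}$ for every $b$, so the hypothesis $b \geq 3$ is comfortably satisfied; if the appendix instead mirrors \M's original geometric counting argument, the constant $3$ is likely an artifact of that cruder route rather than an essential constraint.
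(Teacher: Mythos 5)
Your proof is correct, but it takes a genuinely different route from the paper's. The paper's own argument never uses the linear structure of $\g$ beyond the crude bound $\g(t,f)\leq 1$: from $I_{\g(t,f)}\geq b$ it deduces $\g(t,f)\geq 2^{b}|t|/|\Omega|$ and hence $|t|\leq |\Omega|/2^{b}$, so that $\uptau_b$ sits inside the family of nonempty subsets of size at most $|\Omega|/2^{b}$, and it then bounds the count $\sum_{k=0}^{\lfloor|\Omega|/2^{b}\rfloor}\binom{|\Omega|}{k}\leq 2^{|\Omega|-b}$ via the Sauer--Shelah inequality (Lemmas~\ref{SAUER-SHELAH} and~\ref{BINOMIAL-APPROX}); the hypothesis $b\geq 3$ is exactly what makes that binomial estimate go through. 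You instead stratify by $|t|=k$, use decomposability to compute $\E[\g(T,f)]=k/|\Omega|$ exactly for a uniformly random size-$k$ target, and apply Markov's inequality within each stratum --- in effect invoking the machinery of Theorem~\ref{thm:FMOFT} (equivalently Lemma~\ref{SUBSET-SELECTION-LEMMA}) once per size class with threshold $q_{\text{min}}=2^{b}k/|\Omega|$, then summing over $k$. The trade-off is instructive: the paper's counting route needs only that the metric is bounded by $1$, so it would survive for non-decomposable metrics, but it pays with the $b\geq 3$ restriction and a looser count; your route leans on decomposability in an essential way (to get the exact expectation $k/|\Omega|$) and in exchange eliminates the $b\geq 3$ hypothesis entirely and sharpens the bound to $2^{-b}(1-2^{-|\Omega|})$. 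Your closing diagnosis is right: the constant $3$ is an artifact of the Sauer--Shelah step, not of the statement. One small point worth making explicit if you write this up: nonnegativity of the entries of $\mathbf{P}_{\g,f}$ follows from evaluating $\g$ on singletons, since $\g(\{i\},f)=P_{\g}(X\in\{i\}\mid f)\geq 0$, which together with your $t=\Omega$ normalization justifies treating $\mathbf{P}_{\g,f}$ as a simplex vector, as the paper itself does in Lemma~\ref{SUBSET-SELECTION-LEMMA}.
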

\noindent Thus, the scarcity of $b$-bit favorable targets still holds under  for any \dm.  

\subsection{The Famine of Favorable Targets}
\noindent Following up on the previous result, we can show a similar bound in terms of the success of a given algorithm, for targets of a fixed size.

\begin{restatable}[The Famine of Favorable Targets]{thm}{FMOFT}\label{thm:FMOFT}
For fixed $k \in \N$, fixed information resource $f$, and \dm{} $\g$, define
\begin{align*}
\uptau &= \{T \mid T \subseteq \Omega, |T| = k\}, \text{ and } \\
\uptau_{q_{\text{min}}} &= \{T \mid T \subseteq \Omega, |T| = k, \g(T,f) \geq q_{\text{min}} \}.
\end{align*}
Then,
\begin{align}
    \frac{|\uptau_{q_{\text{min}}}|}{|\uptau|} \leq \frac{p}{q_{\text{min}}}
\end{align}
where $p = \frac{k}{|\Omega|}$.
\end{restatable}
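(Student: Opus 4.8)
The plan is to recognize that the fraction $|\uptau_{q_{\min}}|/|\uptau|$ is exactly the probability that a uniformly random $k$-subset $T$ of $\Omega$ satisfies $\g(T,f) \geq q_{\min}$, and then to control that probability by Markov's inequality. Since $\g$ is decomposable, I would write $\g(T,f) = \mathbf{t}^\top \mathbf{P}_{\g,f}$, where $\mathbf{t}$ is the $k$-hot indicator vector of $T$ and, crucially, $\mathbf{P}_{\g,f}$ does not depend on $T$. Because $\mathbf{P}_{\g,f}$ is a probability distribution on $\Omega$, the quantity $\g(T,f)$ is nonnegative, so Markov's inequality applies directly.

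First I would draw $T$ uniformly from $\uptau$ and compute $\E_T[\g(T,f)]$. By linearity and the fact that $\mathbf{P}_{\g,f}$ is constant in $T$, this equals $\E_T[\mathbf{t}]^\top \mathbf{P}_{\g,f}$. Each coordinate of $\E_T[\mathbf{t}]$ is the probability that a fixed element of $\Omega$ lies in a uniformly random $k$-subset, which is $k/|\Omega| = p$; hence $\E_T[\mathbf{t}] = p\,\mathbf{1}$, the all-ones vector scaled by $p$.

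Substituting then gives $\E_T[\g(T,f)] = p\,\mathbf{1}^\top \mathbf{P}_{\g,f} = p$, since the entries of the distribution $\mathbf{P}_{\g,f}$ sum to one. Applying Markov's inequality to the nonnegative random variable $\g(T,f)$ yields $\P_T[\g(T,f) \geq q_{\min}] \leq \E_T[\g(T,f)]/q_{\min} = p/q_{\min}$. Finally I would identify the left-hand side with $|\uptau_{q_{\min}}|/|\uptau|$, using that $T$ is uniform over $\uptau$, which completes the bound.

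The only real subtlety, and thus the step I would guard most carefully, is the computation $\E_T[\mathbf{t}] = p\,\mathbf{1}$ together with the interchange that pulls $\mathbf{P}_{\g,f}$ outside the expectation. Both rely essentially on decomposability, i.e.\ the conditional independence of $\mathbf{P}_{\g,f}$ from $T$ given $f$; without it the expectation would not factor and the clean value $p$ for $\E_T[\g(T,f)]$ would fail. Everything else is routine, and notably the argument never uses any structure of $\mathbf{P}_{\g,f}$ beyond its being a fixed distribution, which is why the bound holds uniformly over all decomposable metrics and all algorithms.
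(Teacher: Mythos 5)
Your proposal is correct and follows essentially the same route as the paper: rewrite the fraction as $\Pr(\g(\mathbf{S},f)\geq q_{\min})$ for a uniformly random $k$-hot target, apply Markov's inequality, and use decomposability to pull $\mathbf{P}_{\g,f}$ out of the expectation so that $\E[\g(\mathbf{S},f)]=\tfrac{k}{|\Omega|}\mathbf{P}_{\g,f}^{\top}\mathbf{1}=p$. Your direct coordinatewise computation of $\E_T[\mathbf{t}]=p\,\mathbf{1}$ is just a slight shortcut past the paper's binomial-coefficient identity; the argument is otherwise identical.
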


 Here, we compare success not against uniform sampling but against a fixed constant $q_{\text{min}}$. This theorem thus upper bounds the number of targets for which the probability of success of the search is greater than $q_{\text{min}}$. 

\subsection{Famine of Forte}
\noindent We generalize the Famine of Forte ~\cite{montanez2017machine}, showing a bound that holds in the $k$-sparse case using any \dm{}. 
\begin{restatable}[The Famine of Forte]{thm}{FOF}\label{thm:FOF}
    Define 
\[
    \uptau_{k} = \{T \mid T \subseteq \Omega, |T| = k \in \N \}
\] 
and let $\mathcal{B}_m$ denote any set of binary strings, such that the strings are of length $m$ or less. Let 
\begin{align*}
    R &= \{(T, F) \mid T \in \uptau_{k}, F \in \mathcal{B}_m \}, \text{ and } \\
    R_{q_{\text{min}}} &= \{(T, F) \mid T \in \uptau_{k}, F \in \mathcal{B}_m, \g(T,F) \geq q_{\text{min}} \},
\end{align*}
where $\g(T,F)$ is the \dm{} for algorithm $\mathcal{A}$ on problem $( \Omega, T, F)$. Then for any $m \in \N$,
\begin{align}
    \frac{|R_{q_{\text{min}}}|}{|R|} &\leq \frac{p}{q_{\text{min}}}.
\end{align}
\end{restatable}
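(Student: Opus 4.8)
The plan is to bound the fraction of favorable pairs by a Markov-style averaging argument that exploits the linearity decomposability provides. The essential observation is that, because $\g$ is a \dm{}, we may write $\g(T,F) = \mathbf{T}^\top \mathbf{P}_{\g,F}$, where $\mathbf{T}$ is the $k$-hot indicator vector of $T$ and $\mathbf{P}_{\g,F}$ is a probability distribution over $\Omega$ that does not depend on $T$. This reduces summing the metric over all target sets to summing indicator vectors, a purely combinatorial quantity, and it is what makes the final bound independent of the particular metric.

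First I would fix an arbitrary $F \in \mathcal{B}_m$ and compute $\sum_{T \in \uptau_k} \g(T,F) = \left(\sum_{T \in \uptau_k} \mathbf{T}\right)^\top \mathbf{P}_{\g,F}$. Each fixed element of $\Omega$ lies in exactly $\binom{|\Omega|-1}{k-1}$ of the size-$k$ targets, so $\sum_{T \in \uptau_k} \mathbf{T} = \binom{|\Omega|-1}{k-1}\,\mathbf{1}$. Since $\mathbf{P}_{\g,F}$ sums to one, $\mathbf{1}^\top \mathbf{P}_{\g,F} = 1$, and hence $\sum_{T \in \uptau_k} \g(T,F) = \binom{|\Omega|-1}{k-1}$, independently of $F$. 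Summing over all $F \in \mathcal{B}_m$ and using $\binom{|\Omega|-1}{k-1}/\binom{|\Omega|}{k} = k/|\Omega| = p$ gives the exact total
\begin{equation}
\sum_{(T,F) \in R} \g(T,F) = |\mathcal{B}_m| \binom{|\Omega|-1}{k-1} = p\,|R|.
\end{equation}

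With the exact sum in hand, the bound follows from a one-line Markov argument: since every pair in $R_{q_{\text{min}}}$ contributes at least $q_{\text{min}}$ and every term of the metric is nonnegative,
\begin{equation}
q_{\text{min}}\,|R_{q_{\text{min}}}| \leq \sum_{(T,F) \in R_{q_{\text{min}}}} \g(T,F) \leq \sum_{(T,F) \in R} \g(T,F) = p\,|R|,
\end{equation}
and dividing through by $q_{\text{min}}|R|$ yields $|R_{q_{\text{min}}}|/|R| \leq p/q_{\text{min}}$. I expect the main obstacle to be the exact evaluation of the target-sum rather than the closing inequality: the whole argument hinges on recognizing that decomposability collapses $\sum_T \g(T,F)$ into the combinatorial count $\binom{|\Omega|-1}{k-1}$ via the symmetry of $k$-hot vectors together with the normalization of $\mathbf{P}_{\g,F}$. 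Once that identity is established the result is metric-agnostic, which is exactly what lets it generalize the original Famine of Forte to every \dm.
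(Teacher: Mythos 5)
Your proof is correct and follows essentially the same route as the paper's: both are Markov-type averaging arguments that use decomposability to reduce $\sum_{T} \g(T,F)$ to the combinatorial identity $\sum_{T} \mathbf{T} = \binom{|\Omega|-1}{k-1}\mathbf{1}$ together with the normalization of $\mathbf{P}_{\g,F}$. The only organizational difference is that you evaluate the exact global sum over all pairs directly, whereas the paper first takes a supremum over $f$ and then invokes its lemma on the maximum number of satisfying vectors, which contains the same Markov step internally.
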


This demonstrates that for any \dd{} metric there is an upper bound on the proportion of problems an algorithm is successful on.  Here, we measure success as being above a certain threshold with respect to a \dd{} metric, and the upper bound is inversely related to this threshold. 

\subsection{Learning Under Dependence}
\noindent While the previous theorems highlight cases where an algorithm is unlikely to succeed, we now consider the conditions that make an algorithm likely to succeed. To begin, we consider how the target and information resource can influence an algorithm's success by generalizing the Learning Under Dependence theorem~\cite{Montanez2016TheFO}.

\begin{restatable}[Learning Under Dependence]
{thm}{LUD}\label{thm:MUTUAL-INFORMATION-BOUND}
Define $\uptau_{k} = \{T \mid T \subseteq \Omega, |T| = k \in \N\}$ and let $\mathcal{B}_m$ denote any set of binary strings (information resources), such that the strings are of length $m$ or less. Define $q$ as the expected \dd{} probability of success under the joint distribution on $T \in \uptau_{k}$ and $F \in \mathcal{B}_m$ for any fixed algorithm $\mathcal{A}$, such that $q := \E_{T, F}\left[\g(T,F)\right]$, namely,
\[
    q = \E_{T, F}\left[\;P_\g(\omega \in T | F) \right] = \Pr(\omega \in T; \mathcal{A}).
\]
Then, 
\begin{align}
        q \leq \frac{I(T; F) + D(P_T \| \mathcal{U}_T) + 1}{I_{\Omega}}
    \end{align}
    where $I_{\Omega} = -\log k/|\Omega|$, $D(P_T \| \mathcal{U}_T)$ is the Kullback-Liebler divergence between the marginal distribution on $T$ and the uniform distribution on $T$, and $I(T; F)$ is the mutual information. Alternatively, we can write
\begin{align}
    \Pr(\omega \in T; \mathcal{A}) \leq \frac{H(\mathcal{U}_T) - H(T \mid F) + 1}{I_{\Omega}}
\end{align}
where $H(\mathcal{U}_T) = \log\binom{|\Omega|}{k}$.
\end{restatable}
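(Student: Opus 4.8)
The plan is to realize $q$ as the success probability of a single random output element and then control it with information-theoretic inequalities. By decomposability, for each $f$ the metric produces a genuine distribution $\mathbf{P}_{\g,f}$ on $\Omega$ with $\g(T,f) = \mathbf{t}^\top \mathbf{P}_{\g,f} = P_\g(\omega \in T \mid f)$, and crucially $\mathbf{P}_{\g,f}$ does not depend on $T$. I would therefore introduce an auxiliary output variable $\omega$ drawn from $\mathbf{P}_{\g,F}$, so that $q = \E_{T,F}[P_\g(\omega \in T\mid F)] = \Pr(\omega \in T)$ under the joint law of $(T,F,\omega)$. Since the conditional law of $\omega$ given $F$ is independent of $T$, these variables form a Markov chain $T \to F \to \omega$, and the data-processing inequality gives $I(T;\omega) \le I(T;F)$. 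This single appeal to decomposability is what lets the whole argument run for an arbitrary $\g$ rather than only for the expected per-query metric.

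Next I would lower bound $I(T;\omega) = H(T) - H(T\mid \omega)$ in terms of $q$. Introduce the success indicator $Z = \1[\omega \in T]$, so that $\Pr(Z=1) = q$. Since conditioning reduces entropy and $H(Z\mid\omega)\le H(Z)\le 1$, I get
\[
H(T\mid \omega) \le H(Z\mid\omega) + H(T\mid \omega, Z) \le 1 + H(T\mid\omega,Z),
\]
and the $+1$ here is the source of the additive constant in the bound. The term $H(T\mid\omega,Z)$ is controlled by a counting argument: given $\omega$ and the value of $Z$, the set $T$ ranges over at most $\binom{|\Omega|-1}{k-1}$ sets when $Z=1$ (the $k$-subsets containing $\omega$) and at most $\binom{|\Omega|-1}{k}$ sets when $Z=0$ (those avoiding $\omega$), so $H(T\mid\omega,Z) \le q\log\binom{|\Omega|-1}{k-1} + (1-q)\log\binom{|\Omega|-1}{k}$.

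To finish, I would substitute $H(T) = H(\mathcal{U}_T) - D(P_T\|\mathcal{U}_T) = \log\binom{|\Omega|}{k} - D(P_T\|\mathcal{U}_T)$ and simplify the binomial terms with the identities $\binom{|\Omega|-1}{k-1} = \tfrac{k}{|\Omega|}\binom{|\Omega|}{k}$ and $\binom{|\Omega|-1}{k} = \tfrac{|\Omega|-k}{|\Omega|}\binom{|\Omega|}{k}$. The $\log\binom{|\Omega|}{k}$ contributions cancel, leaving
\[
I(T;\omega) \ge -D(P_T\|\mathcal{U}_T) - 1 - q\log\tfrac{k}{|\Omega|} - (1-q)\log\tfrac{|\Omega|-k}{|\Omega|}.
\]
Because $\log\tfrac{|\Omega|-k}{|\Omega|}\le 0$ and $1-q\ge 0$, the last term is nonpositive and may be dropped; combining with $I(T;\omega)\le I(T;F)$ and writing $-\log\tfrac{k}{|\Omega|} = I_\Omega$ yields $q\,I_\Omega \le I(T;F) + D(P_T\|\mathcal{U}_T) + 1$, which is the first claimed bound. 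The equivalent entropy form follows immediately from the identity $I(T;F) + D(P_T\|\mathcal{U}_T) = H(\mathcal{U}_T) - H(T\mid F)$.

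The main obstacle I anticipate is the lower bound on $I(T;\omega)$: one must chain the indicator decomposition with the counting bound correctly and verify that the combinatorial terms collapse exactly to $I_\Omega$ after averaging over $Z$ and cancelling against $H(\mathcal{U}_T)$. The data-processing step is routine once the Markov chain is in place, and the justification of that chain — that $\mathbf{P}_{\g,F}$ is conditionally independent of $T$ given $F$ — is precisely the decomposability hypothesis, so no additional work is needed there.
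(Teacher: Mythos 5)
Your proposal is correct and follows essentially the same Fano-style argument as the paper: the indicator $Z=\1[\omega\in T]$, the data-processing inequality over the Markov chain $T\to F\to\omega$ justified by decomposability, the counting bound on $H(T\mid \omega,Z)$, and the additive $+1$ from bounding the entropy of $Z$. The only cosmetic difference is that you use the tighter count $\binom{|\Omega|-1}{k}$ for the $Z=0$ branch and then discard the resulting nonnegative correction, where the paper bounds that branch by $\log\binom{|\Omega|}{k}$ directly.
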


The value of $q$ defined here represents the expected single-query probability of success of an algorithm relative to a randomly selected target and information resource, distributed according to some joint distribution. The probability of success for a single query (marginalized over information resources) is equivalent to the expectation of the conditional probability of success, conditioned on the random information resource. Upper bounding this value states that regardless of the choice of \dm, the probability of success depends on the amount of information regarding the target contained within the information resource, as measured by the mutual information. 

\subsection{Famine of Favorable Information Resources}
\noindent We now demonstrate the effect of the general bias term defined earlier on the probability of a success of an algorithm. We begin with a generalization of the Famine of Favorable Information Resources~ \cite{montanez2019fobfl}. 

\begin{restatable}[Famine of Favorable Information Resources]{thm}{fofir}\label{thm:fofir}
Let $\mathcal{B}$ be a finite set of information resources and let $t \subseteq \Omega$ be an arbitrary fixed $k$-size target set with corresponding target function $\mathbf{t}$. Define 
        \begin{align*}
            %R &= \{f \mid f \in \mathcal{B} \}, \text{ and}
            %\\
            \mathcal{B}_{q_{\mathrm{min}}} &= \{f \mid f \in \mathcal{B}, \g(t,f) \geq q_{\mathrm{min}} \},
        \end{align*}
        where $\g(t,f)$ is an arbitrary \dm{} for algorithm $\mathcal{A}$ on search problem $(\Omega, t,f)$ and $q_{\mathrm{min}} \in (0,1]$ represents the minimally acceptable probability of success. Then,
        \begin{align}
            \frac{|\mathcal{B}_{q_{\mathrm{min}}}|}{|\mathcal{B}|} &\leq \frac{p +  \mathrm{Bias}(\mathcal{B}, \mathbf{t})}{q_{\mathrm{min}}}
        \end{align}
        where $p = \frac{k}{|\Omega|}$.
    \end{restatable}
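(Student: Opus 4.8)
The plan is to recognize that $p + \mathrm{Bias}(\mathcal{B}, \mathbf{t})$ is precisely the average value of $\g(t,f)$ as $f$ ranges uniformly over $\mathcal{B}$, and then to finish with a one-line Markov-type counting argument.

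First I would instantiate the generalized bias term defined earlier with the distribution $\D$ taken to be uniform over the finite set $\mathcal{B}$. Invoking decomposability to write $\g(t,f) = \mathbf{t}^\top \mathbf{P}_{\g,f}$ and using $\|\mathbf{t}\|^2 = k$ for the $k$-hot target vector $\mathbf{t}$ (so that $\|\mathbf{t}\|^2/|\Omega| = p$), the bias definition becomes
\begin{align*}
\mathrm{Bias}(\mathcal{B}, \mathbf{t}) = \frac{1}{|\mathcal{B}|}\sum_{f \in \mathcal{B}} \mathbf{t}^\top \mathbf{P}_{\g,f} - p = \frac{1}{|\mathcal{B}|}\sum_{f \in \mathcal{B}} \g(t,f) - p.
\end{align*}
Rearranging yields the key identity $p + \mathrm{Bias}(\mathcal{B}, \mathbf{t}) = \frac{1}{|\mathcal{B}|}\sum_{f \in \mathcal{B}} \g(t,f)$, exhibiting the right-hand numerator as the mean success over $\mathcal{B}$.

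Next I would lower-bound this total success by the contribution of the favorable resources alone. Since each $f \in \mathcal{B}_{q_{\mathrm{min}}}$ satisfies $\g(t,f) \geq q_{\mathrm{min}}$ and every summand $\g(t,f)$ is nonnegative (being a probability), I obtain
\begin{align*}
\sum_{f \in \mathcal{B}} \g(t,f) \;\geq\; \sum_{f \in \mathcal{B}_{q_{\mathrm{min}}}} \g(t,f) \;\geq\; |\mathcal{B}_{q_{\mathrm{min}}}|\, q_{\mathrm{min}}.
\end{align*}
Combining this with the identity above and dividing through by $|\mathcal{B}|\,q_{\mathrm{min}}$ (permissible because $q_{\mathrm{min}} > 0$) delivers exactly $|\mathcal{B}_{q_{\mathrm{min}}}|/|\mathcal{B}| \leq (p + \mathrm{Bias}(\mathcal{B}, \mathbf{t}))/q_{\mathrm{min}}$.

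There is no genuinely hard step here — the whole argument is Markov's inequality for the uniform distribution on $\mathcal{B}$. The only content that requires care is the bookkeeping in the first step: correctly specializing the generalized bias definition to a uniform $\D$ over $\mathcal{B}$ and using decomposability to collapse $\mathbf{t}^\top \mathbf{P}_{\g,f}$ back into $\g(t,f)$, so that the bias-plus-$p$ quantity is revealed to be an average. Once that identification is made, the bound is immediate.
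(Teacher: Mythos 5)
Your proof is correct and is essentially the paper's own argument: both identify $p + \mathrm{Bias}(\mathcal{B},\mathbf{t})$ as $\E_{\mathcal{U}[\mathcal{B}]}[\g(t,F)]$ via decomposability and the uniform instantiation of the bias definition, and then apply Markov's inequality (which you simply write out by hand as a counting bound over the favorable $f$). No substantive difference.
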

This result demonstrates the mathematical effect of bias, of which we have previously provided one hypothetical example (car parking). Now, we can show that the bias of our expected information resources towards the target upper bounds the probability of a given information resource leading to a successful search.

\subsection{Futility of Bias-Free Search}
\noindent We can also use our definition of bias to generalize the Futility of Bias-Free Search \cite{montanez2017machine}, which demonstrates the inability of an algorithm to perform better than uniform random sampling without bias, defined with respect to the \ep. Our generalization proves that the theorem holds for bias defined with respect to any \dm. 

\begin{restatable}[Futility of Bias-Free Search]{thm}{futility}\label{thm:futilitybs}
    For any fixed algorithm $\mathcal{A}$, fixed target $t \subseteq \Omega$ with corresponding target function $\mathbf{t}$, and distribution over information resources $\mathcal{D}$, if $\bias(\mathcal{D}, \mathbf{t}) = 0$, then
    \begin{align}
        \Pr(\omega \in t; \mathcal{A}) &= p
    \end{align}
    where $\Pr(\omega \in t; \mathcal{A})$ represents the expected \dd{} probability of successfully sampling an element of $t$ using $\mathcal{A}$, marginalized over information resources $F \sim \mathcal{D}$, and $p$ is the single-query probability of success under uniform random sampling.
    \end{restatable}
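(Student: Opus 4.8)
The plan is to reduce the statement to a direct unwinding of the generalized bias definition together with the \dy{} of $\phi$. The central observation is that the quantity $\Pr(\omega \in t; \mathcal{A})$ --- the expected \dd{} probability of success marginalized over $F \sim \mathcal{D}$ --- is exactly the first summand appearing in the generalized bias. Once this identification is made, setting the bias to zero pins the quantity to $p$.

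First I would write $\Pr(\omega \in t; \mathcal{A}) = \mathbb{E}_{\mathcal{D}}[\phi(t, F)]$, identifying the ``expected \dd{} probability of successfully sampling an element of $t$'' with the expectation of the metric over the random information resource. Then, invoking \dy{} of $\phi$, I would replace $\phi(t,F)$ by $\mathbf{t}^{\top}\mathbf{P}_{\phi, F}$ and pull the target function outside the expectation --- legitimate precisely because $\mathbf{P}_{\phi, F}$ is conditionally independent of $t$ given $F$ --- obtaining $\Pr(\omega \in t; \mathcal{A}) = \mathbf{t}^{\top}\mathbb{E}_{\mathcal{D}}[\mathbf{P}_{\phi, F}]$.

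Next I would compare this against the generalized bias definition $\text{Bias}(\mathcal{D}, \mathbf{t}) = \mathbf{t}^{\top}\mathbb{E}_{\mathcal{D}}[\mathbf{P}_{\phi, F}] - k/|\Omega|$. Substituting the hypothesis $\text{Bias}(\mathcal{D}, \mathbf{t}) = 0$ immediately yields $\mathbf{t}^{\top}\mathbb{E}_{\mathcal{D}}[\mathbf{P}_{\phi, F}] = k/|\Omega|$, and since the right-hand side is $p$ (the uniform single-query success probability for a $k$-size target in $\Omega$), the conclusion $\Pr(\omega \in t; \mathcal{A}) = p$ follows.

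Because every step is a direct substitution, I do not anticipate a genuine technical obstacle; the only point requiring care is the notational bookkeeping --- specifically verifying that $\Pr(\omega \in t; \mathcal{A})$ as defined in the statement coincides with $\mathbb{E}_{\mathcal{D}}[\mathbf{t}^{\top}\mathbf{P}_{\phi, F}]$ rather than some per-query or unconditioned variant, and confirming that $\|\mathbf{t}\|^2 = k$ for the $k$-hot target function so that $k/|\Omega|$ is indeed $p$. Once those identifications are fixed, the theorem is an immediate corollary of the generalized bias definition.
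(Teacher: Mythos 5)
Your proposal is correct and follows essentially the same route as the paper's own proof: both unwind $\Pr(\omega \in t; \mathcal{A})$ as $\mathbb{E}_{\mathcal{D}}[\mathbf{t}^{\top}\mathbf{P}_{\g,F}]$ via marginalization over $F$ and decomposability, then recognize this expectation as $\bias(\mathcal{D},\mathbf{t}) + p$ and set the bias to zero. The only cosmetic difference is that the paper writes out the marginalization integral $\int_{\mathcal{F}} P_\g(\omega \in t \mid f)\mathcal{D}(f)\,\text{d}f$ explicitly before collapsing it to the expectation you start from.
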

    
This result demonstrates that, regardless of how we measure the success of an algorithm with respect to a decomposable metric, it cannot perform better than uniform random sampling without bias.

\subsection{Famine of Favorable Biasing Distributions}
\noindent \M{} proved that the percentage of minimally favorable distributions (biased over some threshold towards some specific target) is inversely proportional to the threshold value and directly proportional to the bias between the information resource and target function~\cite{montanez2017machine}. We will show that this scarcity of favorable biasing distributions holds, in general, for bias under any \dm.

 \begin{restatable}[Famine of Favorable Biasing Distributions]{thm}{fofbd}\label{thm:fofbd}
      Given a fixed target function $\mathbf{t}$, a finite set of information resources $\mathcal{B}$, a distribution over information resources $\mathcal{D}$, and a set $\mathcal{P} = \{\mathcal{D}\mid \mathcal{D} \in \mathbb{R}^{|\mathcal{B}|}, \sum_{f \in \mathcal{B}} \mathcal{D}(f) = 1 \}$ of all discrete $|\mathcal{B}|$-dimensional simplex vectors,
      \begin{equation}
        \frac{\mu(\mathcal{G}_{\mathbf{t}, q_\mathrm{min}})}{\mu(\mathcal{P})} \leq \frac{p + \bias(\mathcal{B}, \mathbf{t})}{q_\mathrm{min}}
      \end{equation}
      where $\mathcal{G}_{\mathbf{t}, q_\mathrm{min}} = \{\mathcal{D} \mid \mathcal{D} \in \mathcal{P}, \bias(\mathcal{D}, \mathbf{t}) \geq q_\mathrm{min}\}, p=\frac{k}{\Omega}$, and $\mu$ is Lebesgue measure.
    \end{restatable}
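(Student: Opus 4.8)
The plan is to reduce the statement to a single application of Markov's inequality on the simplex $\mathcal{P}$, once I have identified the $\mu$-average over $\mathcal{P}$ of the relevant success functional with $p + \bias(\mathcal{B},\mathbf{t})$.

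First I would introduce the success functional $s(\D) := \mathbf{t}^{\top}\E_{\D}[\mathbf{P}_{\g,F}]$ defined for $\D \in \mathcal{P}$. Since $\g$ is decomposable and expectation is linear, $s(\D) = \sum_{f \in \mathcal{B}} \D(f)\,\g(t,f)$, which is an affine, hence continuous and measurable, function of $\D$ taking values in $[0,1]$ (it is a probability). Using the generalized definition of bias, $s(\D) = \bias(\D,\mathbf{t}) + p$, so the favorable set can be rewritten as $\mathcal{G}_{\mathbf{t},q_{\mathrm{min}}} = \{\D \in \mathcal{P} : s(\D) \geq p + q_{\mathrm{min}}\}$, which is closed (hence measurable) and satisfies the containment $\mathcal{G}_{\mathbf{t},q_{\mathrm{min}}} \subseteq \{\D \in \mathcal{P} : s(\D) \geq q_{\mathrm{min}}\}$ since $p \geq 0$.

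The key step is to compute the $\mu$-average of $s$ over $\mathcal{P}$. I would argue by symmetry rather than by an explicit Dirichlet integral: each coordinate permutation of $\mathcal{B}$ induces a $\mu$-preserving bijection of the symmetric simplex $\mathcal{P}$, so $\int_{\mathcal{P}} \D(f)\, d\mu(\D)$ takes the same value for every $f \in \mathcal{B}$; since these coordinates sum to $1$ pointwise, their integrals sum to $\mu(\mathcal{P})$, giving $\frac{1}{\mu(\mathcal{P})}\int_{\mathcal{P}} \D(f)\, d\mu(\D) = 1/|\mathcal{B}|$. Linearity then yields
\begin{equation}
\frac{1}{\mu(\mathcal{P})}\int_{\mathcal{P}} s(\D)\, d\mu(\D) = \frac{1}{|\mathcal{B}|}\sum_{f \in \mathcal{B}} \g(t,f) = p + \bias(\mathcal{B},\mathbf{t}),
\end{equation}
where the last equality is exactly the bias of $\mathbf{t}$ under the uniform distribution on $\mathcal{B}$.

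Finally, treating $\mu/\mu(\mathcal{P})$ as a probability measure on $\mathcal{P}$ and using $s \geq 0$, Markov's inequality gives $\mu(\{s \geq q_{\mathrm{min}}\})/\mu(\mathcal{P}) \leq (p + \bias(\mathcal{B},\mathbf{t}))/q_{\mathrm{min}}$, and combining this with the containment above bounds $\mu(\mathcal{G}_{\mathbf{t},q_{\mathrm{min}}})/\mu(\mathcal{P})$ by the same quantity. The main obstacle is the measure-theoretic bookkeeping: $\mathcal{P}$ is a $(|\mathcal{B}|-1)$-dimensional object inside $\mathbb{R}^{|\mathcal{B}|}$, so $\mu$ must be read as the induced $(|\mathcal{B}|-1)$-dimensional Lebesgue measure on the defining hyperplane, and I must verify that coordinate permutations (isometries of that hyperplane fixing $\mathcal{P}$ setwise) genuinely preserve $\mu$. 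Once that invariance is in place, the symmetry argument replaces any direct volume computation and the remaining steps are routine.
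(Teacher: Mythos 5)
Your proposal is correct and matches the approach of the proof the paper relies on: the paper itself only cites Monta\~{n}ez's original argument (substituting the generalized bias), and that argument is exactly your reduction --- rewrite $\bias(\D,\mathbf{t}) \geq q_{\mathrm{min}}$ as a threshold event for the nonnegative functional $s(\D) = \mathbf{t}^{\top}\E_{\D}[\mathbf{P}_{\g,F}]$, compute its average over the uniform measure on the simplex as $p + \bias(\mathcal{B},\mathbf{t})$ using $\E[\D(f)] = 1/|\mathcal{B}|$, and apply Markov's inequality. Your permutation-symmetry justification of $\E[\D(f)] = 1/|\mathcal{B}|$ is a valid (and arguably cleaner) substitute for an explicit Dirichlet-volume computation, and the rest of the bookkeeping is sound.
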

 This result shows that the more bias there is between our set of information resources $\mathcal{B}$ and the target function $\mathbf{t}$, the easier it is to find a minimally favorable distribution, and the higher the threshold for what qualifies as a minimally favorable distribution, the harder our search becomes. Thus, unless we want to suppose that we begin with a set of information resources already favorable towards our fixed target, finding a highly favorable distribution is difficult.

\section{\uppercase{Conclusion}}
\noindent Casting machine learning problems as search provides a common formalism within which to prove bounds and impossibility results for a wide variety of learning algorithms and tasks. In this paper, we introduce a property of probability-of-success metrics called \dy, and show that the \ep{} and \gp{} are \dd. To demonstrate the value of this property, we prove that a number of existing algorithmic search framework results continue to hold for all \dm s. These results provide a number of useful insights: we show that algorithmic performance is conserved with respect to all \dm s, favorable targets are scarce no matter your \dm, and that without the generalized bias defined here, an algorithm will not perform better than uniform random sampling.  

The goal of this work is to offer additional machinery within the search framework, allowing for more general application. Concretely, we can develop \dm s for problems concerned with the state of an algorithm at specific steps, and leverage existing results as a foundation for additional insight into those problems.

\section*{\uppercase{Acknowledgements}}

\noindent This work was supported by the Walter Bradley Center for Natural and Artificial Intelligence. We thank Dr. Robert J. Marks II (Baylor University) for providing support and feedback. We also thank Harvey Mudd College's Department of Computer Science for their continued resources and support. 

\bibliographystyle{apalike}
{\small
\bibliography{references}}

\section{Appendix}
\noindent Lemma \ref{SAUER-SHELAH}, Lemma \ref{BINOMIAL-APPROX}, Lemma \ref{SUBSET-SELECTION-LEMMA}, and Lemma \ref{GPSuccess} with their proofs are taken from \cite{montanez2017machine}, with  Lemma~\ref{GPSuccess} being adapted for \dm s. 
\subsection{Lemmata}
\begin{lem}[Sauer-Shelah Inequality]\label{SAUER-SHELAH}
    For $d \leq n$, $\sum_{j=0}^{d}\binom{n}{j} \leq \left(\frac{en}{d}\right)^d$.
\end{lem}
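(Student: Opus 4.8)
The plan is to prove the inequality by the classical multiplicative trick: rescale the partial binomial sum by a factor that lets me invoke the full binomial theorem. Assuming $1 \le d \le n$ (the case $d = 0$ being degenerate, given the division by $d$ in the bound), I would first multiply through by $(d/n)^d$, so that it suffices to establish
\[
    \left(\frac{d}{n}\right)^d \sum_{j=0}^{d}\binom{n}{j} \le e^d.
\]

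The key observation is that since $d \le n$, the ratio $d/n$ lies in $(0,1]$, and raising a number in $(0,1]$ to a larger exponent only decreases it. Hence for every index $j$ with $0 \le j \le d$ we have $(d/n)^d \le (d/n)^j$. Applying this term by term and then enlarging the range of summation from $\{0,\dots,d\}$ to $\{0,\dots,n\}$ (which only adds nonnegative terms) lets the binomial theorem collapse the sum to a closed form:
\[
    \left(\frac{d}{n}\right)^d \sum_{j=0}^{d}\binom{n}{j}
    \le \sum_{j=0}^{d}\binom{n}{j}\left(\frac{d}{n}\right)^j
    \le \sum_{j=0}^{n}\binom{n}{j}\left(\frac{d}{n}\right)^j
    = \left(1 + \frac{d}{n}\right)^n.
\]

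Finally I would apply the elementary exponential bound $1 + x \le e^x$ with $x = d/n$, giving $(1 + d/n)^n \le e^d$. Chaining these inequalities yields $(d/n)^d \sum_{j=0}^{d}\binom{n}{j} \le e^d$, and dividing through by $(d/n)^d$ produces the claimed bound $(en/d)^d$. I do not anticipate a genuine obstacle, as this is a short self-contained calculation; the only step requiring care is the term-by-term comparison $(d/n)^d \le (d/n)^j$, which hinges entirely on the hypothesis $d \le n$ (so that $d/n \le 1$) and on the direction of the inequality when exponentiating a fraction at most $1$. Everything else is the binomial theorem and the standard inequality $1 + x \le e^x$.
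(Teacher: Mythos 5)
Your proposal is correct and is essentially the same argument as the paper's: both rescale by $(d/n)^d$, compare term by term using $(d/n)^d \le (d/n)^j$, extend the sum to $n$ and apply the binomial theorem to obtain $\left(1+\tfrac{d}{n}\right)^n$. The only (cosmetic) difference is the final step, where you invoke $1+x \le e^x$ directly while the paper bounds $\left(1+\tfrac{d}{n}\right)^n$ by its limit $e^d$; your version is arguably the cleaner of the two.
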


\begin{proof}
    We reproduce a simple proof of the Sauer-Shelah inequality~\cite{SAUER-SHELAH} for completeness.
\begin{align*}
    \sum_{j=0}^{d}\binom{n}{j} &\leq \left(\frac{n}{d}\right)^d\sum_{j=0}^{d}\binom{n}{j}\left(\frac{d}{n}\right)^j\\
      &\leq \left(\frac{n}{d}\right)^d\sum_{j=0}^{n}\binom{n}{j}\left(\frac{d}{n}\right)^j\\
      &= \left(\frac{n}{d}\right)^d\left(1+\frac{d}{n}\right)^n\\
      &\leq \left(\frac{n}{d}\right)^d \lim_{n\rightarrow\infty}\left(1+\frac{d}{n}\right)^n\\
      &= \left(\frac{en}{d}\right)^d.
\end{align*}
\end{proof}

\begin{lem}[Binomial Approximation]\label{BINOMIAL-APPROX}
        $\sum_{j=0}^{\left\lfloor\frac{n}{2^b}\right\rfloor}\binom{n}{j} \leq 2^{n-b}$ for $b \geq 3$ and $n \geq 2^b$.
\end{lem}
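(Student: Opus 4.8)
The plan is to invoke the Sauer--Shelah inequality (Lemma~\ref{SAUER-SHELAH}) with the choice $d = \lfloor n/2^b \rfloor$ and then argue that the resulting bound is itself at most $2^{n-b}$. Since $n \geq 2^b$ we have $1 \leq d = \lfloor n/2^b \rfloor \leq n$, so Lemma~\ref{SAUER-SHELAH} applies and yields $\sum_{j=0}^{\lfloor n/2^b \rfloor}\binom{n}{j} \leq (en/d)^d$. It therefore suffices to establish the purely analytic inequality $(en/d)^d \leq 2^{n-b}$.

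To dispose of the floor cleanly, I would first note that the function $g(x) = (en/x)^x$ is increasing on $(0,n)$, since $\frac{d}{dx}\ln g(x) = \ln(n/x) > 0$ there. Because $d = \lfloor n/2^b \rfloor \leq n/2^b < n$, monotonicity gives $(en/d)^d \leq \bigl(e\,2^b\bigr)^{n/2^b}$, replacing the awkward floor by the exact value $n/2^b$ at the cost of only enlarging the bound. Taking base-$2$ logarithms, the goal $\bigl(e\,2^b\bigr)^{n/2^b} \leq 2^{n-b}$ reduces to the scalar inequality $\tfrac{n}{2^b}\bigl(\log_2 e + b\bigr) \leq n - b$.

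Finally I would verify this scalar inequality using both hypotheses. Viewing $(n-b) - \tfrac{n}{2^b}(\log_2 e + b)$ as a linear function of $n$, its slope is $1 - \tfrac{\log_2 e + b}{2^b}$, which is strictly positive for $b \geq 3$ because $\log_2 e + b < 2^b$ in that range; hence the expression is minimized over the admissible region at the smallest value $n = 2^b$. Substituting $n = 2^b$ collapses everything to $2^b \geq 2b + \log_2 e$, which holds for every $b \geq 3$. The main obstacle is precisely this last step: it is where the two hypotheses become binding, since the positive-slope condition is what lets me reduce to the boundary $n = 2^b$, and the boundary estimate $2^b \geq 2b + \log_2 e$ is what forces $b \geq 3$ (it already fails at $b = 2$). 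Getting both conditions to line up so that the reduction to $n = 2^b$ is valid and the boundary case holds is the delicate part; the rest is routine.
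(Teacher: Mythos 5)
Your proposal is correct and follows essentially the same route as the paper: apply the Sauer--Shelah inequality and reduce to the scalar inequality $\tfrac{n}{2^b}(b+\log_2 e)\leq n-b$, which both arguments ultimately settle by using $n\geq 2^b$ to push to the boundary and then invoking $2^b\geq 2b+\log_2 e$ for $b\geq 3$. The only (minor, and slightly cleaner) difference is that you apply Sauer--Shelah at the integer $d=\lfloor n/2^b\rfloor$ and pass to $n/2^b$ via monotonicity of $(en/x)^x$, whereas the paper applies it directly at the possibly non-integer upper limit $n/2^b$ and drops to the floored sum at the end.
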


\begin{proof}  
    By the condition $b \geq 3$, we have 
    \begin{align*}
        2b + \log_2e &\leq 2^b
    \end{align*}
     which implies $2^{-b}(2b + \log_2e) \leq 1$. Therefore,
    \begin{align*}
        1 &\geq 2^{-b}(2b + \log_2e) \\
          &= \frac{b}{2^{b}} + \frac{b + \log_2e}{2^b} \\
          &\geq \frac{b}{n} + \frac{b + \log_2e}{2^b},
    \end{align*}
    using the condition $n \geq 2^b$, which implies
    \begin{align*}
        n &\geq b + \frac{n}{2^b}(b + \log_2 e).
    \end{align*}
Thus,
    \begin{align*}
        2^n &\geq 2^{b + \frac{n}{2^b}(b + \log_2 e)} \\
            &= 2^b 2^{\frac{n}{2^b} (b + \log_2e)} \\
            &= 2^b \left(2^b2^{\log_2e}\right)^{\frac{n}{2^b}} \\
            &= 2^b \left(2^b e\right)^{\frac{n}{2^b}} \\
            &= 2^b \left(\frac{en}{\frac{n}{2^b}}\right)^{\frac{n}{2^b}} \\
            &\geq 2^b \sum_{j = 0}^{\frac{n}{2^b}} \binom{n}{j} \\
            &\geq 2^b \sum_{j = 0}^{\floor{\frac{n}{2^b}}} \binom{n}{j},
    \end{align*}
    where the penultimate inequality follows from the Sauer-Shelah inequality~\cite{SAUER-SHELAH}. Dividing through by $2^{b}$ gives the desired result.
\end{proof}

\ifx
\begin{lem}(General Per Query Performance from Expected Distribution)\label{EXPECTED-PER-QUERY-PERFORMANCE}
Let $\A$ be a search algorithm. Define $t$ as a $n$-time-step moving target, $q_{\alpha}(t,f)$ as the general per query probability of success for $\A$, and $\nu$ be the conditional joint measure induced by that algorithm over finite sequences of probability distributions and search histories, conditioned on external information resource $f$. Let $P_i$ be the $i$th distribution in the probability distribution sequence $\tilde{P}$ and denote a search history as $h$. \newline 
    Let $P_{\alpha}$ be an arbitrary probability distribution, where 
    \[ \mathbb{E}_{P\sim P_{\alpha}}[P(x)] = \sum_{i=1}^{|\tilde{P}|} \alpha_i P_i(x) \]
    
    Then, 
    \[ q_{\alpha}(t,f) =  P_{\alpha}(X \in t|f)  \]
    
\end{lem}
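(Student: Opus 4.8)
The plan is to exhibit the distribution $P_\alpha$ explicitly and then reduce the whole statement to linearity of (conditional) expectation over the finite set $\Omega$. For each $x \in \Omega$, I would define
\[
P_\alpha(x) := \E_{\tilde{P}, H}\left[\sum_{i=1}^{|\tilde{P}|} \alpha_i P_i(x) \,\bigg|\, f\right],
\]
which is exactly the outer expectation (over the random sequence $\tilde{P}$ and the search history $H$) of the inner mixture $\E_{P \sim P_\alpha}[P(x)] = \sum_i \alpha_i P_i(x)$ given in the hypothesis. The proof then splits into a well-definedness part and a one-line linearity part, mirroring the structure used for Lemma~\ref{lem:DEP} and Lemma~\ref{lem:DGP}.

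First I would verify that $P_\alpha$ is a genuine probability distribution on $\Omega$. Nonnegativity is immediate, since each weight $\alpha_i \geq 0$ and each $P_i(x) \geq 0$, so the integrand and hence its expectation are nonnegative. For normalization I would sum over $x \in \Omega$ and interchange the finite sum with the expectation, obtaining $\sum_{x \in \Omega} P_\alpha(x) = \E[\sum_i \alpha_i \sum_{x \in \Omega} P_i(x) \mid f]$; because each $P_i$ is itself a distribution we have $\sum_{x} P_i(x) = 1$, and because $\alpha$ is a distribution over the sequence indices we have $\sum_i \alpha_i = 1$, so the expression collapses to $\E[1 \mid f] = 1$.

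For the main identity I would write $P_i(w \in t) = \sum_{x \in t} P_i(x)$, substitute this into the definition of $q_\alpha(t,f)$, and move the finite sum over $x \in t$ outside the expectation. This step is legitimate with no convergence concerns because $\Omega$ is finite and every summand lies in $[0,1]$, so linearity of conditional expectation applies directly. The result is
\[
q_\alpha(t,f) = \sum_{x \in t} P_\alpha(x) = P_\alpha(X \in t \mid f),
\]
which is the claim; in vector form this is precisely the decomposability statement $q_\alpha(t,f) = \mathbf{t}^{\top}\mathbf{P}_{\alpha,f}$ of Lemma~\ref{lem:DGP}.

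The one genuine subtlety — and the step I would treat most carefully — is that the length $|\tilde{P}|$ of the distribution sequence is random, so the weight vector $\alpha$ is indexed over a random set and the inner mixture is a random object. The point is to ensure that the outer expectation $\E_{\tilde{P},H}[\,\cdot\mid f]$ correctly marginalizes over this random number of steps, so that $P_\alpha$ emerges as a \emph{single} fixed distribution on $\Omega$ rather than a random one. This is exactly what the paper's ``abuse of notation'' remark is designed to cover: $\alpha$ denotes the generating process that produces, for each realized sequence length, the appropriate weights, and once the expectation integrates out both the length and the $P_i$, everything reduces to the finite bookkeeping above.
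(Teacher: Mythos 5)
Your proposal is correct and follows essentially the same route as the paper's own proof: both define $\mathbf{P}_{\alpha,f}$ componentwise as the expectation (integral against $\nu(\tilde{P},h\mid f)$) of the mixture $\sum_i \alpha_i P_i(x)$, and then exchange the finite sum over $x \in t$ with that expectation to identify $q_\alpha(t,f)$ with $P_\alpha(X \in t \mid f)$. The only additions on your side are the explicit check that $P_\alpha$ is a valid probability distribution and the remark about the random sequence length, both of which the paper leaves implicit; these are welcome but do not change the argument.
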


\begin{proof}
Since $P(x)$ is a probability distribution, \newline

 \begin{align*}
        &P_{\alpha}(X \in t|f)  \\
        &= \sum_x \1_{x \in t} \int \mathbb{E}_{P\sim P_{\alpha}}[P(x)]d\nu(\tilde{P},h|f) \\
        &= \sum_x \1_{x \in t} \int \sum_{i=1}^{|\tilde{P}|} \alpha_i P_i(x)d\nu (\tilde{P},h|f)\\
        &= \int \sum_{i=1}^{|\tilde{P}|} \alpha_i \sum_x \1_{x \in t} P_i(x)d\nu (\tilde{P},h|f)\\
        &= \mathbb{E}_{\tilde{P}} \left[\sum_{i=1}^{|\tilde{P}|} P_i(x \in t | f) \right] \\
        &= q_{\alpha}(t,f) = P_{\alpha}^T t \\
\end{align*}
\end{proof}
\fi
\begin{lem}\label{SUBSET-SELECTION-LEMMA}(Maximum Number of Satisfying Vectors)
    Given an integer $1 \leq k \leq n$, a set $\mathcal{S} = \{\mathbf{s} : \mathbf{s} \in \{0, 1\}^n, \|\mathbf{s}\| = \sqrt{k}\}$ of all $n$-length $k$-hot binary vectors, a set $\mathcal{P} = \{\mathbf{P} : \mathbf{P} \in \mathbb{R}^n, \sum_j \mathbf{P}_j = 1\}$ of discrete $n$-dimensional simplex vectors, and a fixed scalar threshold $\epsilon \in [0, 1]$, then for any fixed $\mathbf{P} \in \mathcal{P}$,
\[
    \sum_{\mathbf{s}\in \mathcal{S}} \1_{\mathbf{s}^\top \mathbf{P} \geq \epsilon} \leq \frac{1}{\epsilon}\binom{n-1}{k-1}
\]
where $\mathbf{s}^\top \mathbf{P}$ denotes the vector dot product between $\mathbf{s}$ and $\mathbf{P}$. 
\end{lem}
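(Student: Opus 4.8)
The plan is to use a double-counting identity followed by an elementary Markov-inequality argument. First I would compute the aggregate sum $\sum_{\mathbf{s} \in \mathcal{S}} \mathbf{s}^\top \mathbf{P}$ exactly by swapping the order of summation:
\begin{align*}
\sum_{\mathbf{s} \in \mathcal{S}} \mathbf{s}^\top \mathbf{P} = \sum_{\mathbf{s} \in \mathcal{S}} \sum_{j=1}^{n} \mathbf{s}_j \mathbf{P}_j = \sum_{j=1}^{n} \mathbf{P}_j \left( \sum_{\mathbf{s} \in \mathcal{S}} \mathbf{s}_j \right).
\end{align*}
The inner sum counts how many $k$-hot vectors carry a $1$ in coordinate $j$; fixing that coordinate to $1$ and distributing the remaining $k-1$ ones among the other $n-1$ coordinates shows this count equals $\binom{n-1}{k-1}$, independent of $j$. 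Factoring this constant out and using $\sum_j \mathbf{P}_j = 1$ then yields the clean identity $\sum_{\mathbf{s} \in \mathcal{S}} \mathbf{s}^\top \mathbf{P} = \binom{n-1}{k-1}$.

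Next I would apply Markov's inequality in its elementary counting form. Because $\mathbf{P}$ is a simplex (probability) vector with nonnegative entries and each $\mathbf{s}$ is binary, every summand $\mathbf{s}^\top \mathbf{P}$ is nonnegative, so the full sum dominates the partial sum over only those vectors exceeding the threshold. Writing $N = \sum_{\mathbf{s} \in \mathcal{S}} \1_{\mathbf{s}^\top \mathbf{P} \geq \epsilon}$, I would chain
\begin{align*}
\binom{n-1}{k-1} = \sum_{\mathbf{s} \in \mathcal{S}} \mathbf{s}^\top \mathbf{P} \;\geq\; \sum_{\mathbf{s} : \, \mathbf{s}^\top \mathbf{P} \geq \epsilon} \mathbf{s}^\top \mathbf{P} \;\geq\; N \epsilon,
\end{align*}
where the first inequality discards the nonnegative terms below threshold and the second bounds each retained term below by $\epsilon$. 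Dividing through by $\epsilon$ gives $N \leq \frac{1}{\epsilon}\binom{n-1}{k-1}$, which is exactly the claim.

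I do not expect a genuine obstacle here; the argument is short once the two ingredients are in place. If I had to name the crux, it is the double-counting identity, namely recognizing that the column sums of the indicator matrix whose rows are the elements of $\mathcal{S}$ are all equal to $\binom{n-1}{k-1}$, together with the nonnegativity of the entries of $\mathbf{P}$, which is what licenses the Markov step. The only degenerate case is $\epsilon = 0$, for which the right-hand side is understood as $+\infty$ and the bound holds vacuously; for all $\epsilon \in (0,1]$ the division above is valid.
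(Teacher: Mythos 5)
Your proof is correct and is essentially the paper's argument: the paper introduces a uniform random vector $\mathbf{S}$ over $\mathcal{S}$, computes $\E[\mathbf{S}^\top\mathbf{P}] = k/n$ via the same column-sum identity $\sum_{\mathbf{s}\in\mathcal{S}}\mathbf{s} = \mathbf{1}\binom{n-1}{k-1}$, and applies Markov's inequality, which is just the probabilistic dress on your direct counting version. Both treat $\epsilon = 0$ as a trivial edge case, so there is nothing to fix.
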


\begin{proof}
    For $\epsilon = 0$, the bound holds trivially. For $\epsilon > 0$, let $\mathbf{S}$ be a random quantity that takes values $\mathbf{s}$ uniformly in the set $\mathcal{S}$. Then, for any fixed $\mathbf{P} \in \mathcal{P}$,
\begin{align*}
    \sum_{\mathbf{s}\in \mathcal{S}} \1_{\mathbf{s}^\top \mathbf{P} \geq \epsilon} &= \binom{n}{k} \mathbb{E}\left[\1_{\mathbf{S}^\top \mathbf{P} \geq \epsilon}\right] \\
          &= \binom{n}{k} \Pr\left(\mathbf{S}^\top \mathbf{P} \geq \epsilon \right).
\end{align*}
    Let $\mathbf{1}$ denotes the all ones vector. Under a uniform distribution on random quantity $\mathbf{S}$ and because $\mathbf{P}$ does not change with respect to $\mathbf{s}$, we have
\begin{align*}
    \mathbb{E}\left[\mathbf{S}^\top \mathbf{P} \right] &= \binom{n}{k}^{-1}\sum_{\mathbf{s}\in \mathcal{S}}\mathbf{s}^\top \mathbf{P} \\
         &= \mathbf{P}^\top\binom{n}{k}^{-1}\sum_{\mathbf{s}\in \mathcal{S}}\mathbf{s} \\
         &= \mathbf{P}^\top\frac{\mathbf{1}\binom{n-1}{k-1}}{\binom{n}{k}} \\
         &= \mathbf{P}^\top\frac{\mathbf{1}\binom{n-1}{k-1}}{\frac{n}{k}\binom{n-1}{k-1}} \\
         &= \frac{k}{n}\mathbf{P}^\top\mathbf{1} \\
         &= \frac{k}{n}
\end{align*}
since $\mathbf{P}$ must sum to $1$.

Noting that $\mathbf{S}^\top \mathbf{P} \geq 0$, we use Markov's inequality to obtain
\begin{align*}
\sum_{\mathbf{s}\in \mathcal{S}} \1_{\mathbf{s}^\top \mathbf{P} \geq \epsilon}
          &= \binom{n}{k} \Pr\left(\mathbf{S}^\top \mathbf{P} \geq \epsilon \right) \\
          &\leq \binom{n}{k} \frac{1}{\epsilon}\mathbb{E}\left[\mathbf{S}^\top \mathbf{P}\right] \\
          &= \binom{n}{k} \frac{1}{\epsilon} \frac{k}{n} \\
          &= \frac{1}{\epsilon}\frac{k}{n} \frac{n}{k}\binom{n-1}{k-1} \\
          &= \frac{1}{\epsilon}\binom{n-1}{k-1}.
\end{align*}
\end{proof}
\begin{lem}\label{GPSuccess}
If $X \perp T|F$, then
\[
\Pr(X \in T; \mathcal{A}) = \E_{T, F}[\g (T, F)].
\]

\end{lem}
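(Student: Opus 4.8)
The plan is to compute $\Pr(X \in T; \mathcal{A})$ by conditioning on the pair $(T, F)$ and then invoking both the conditional independence hypothesis and the decomposability of $\g$. First I would write the marginal success probability as an expectation of an indicator, $\Pr(X \in T; \mathcal{A}) = \E[\1_{X \in T}]$, where the expectation is taken over the joint distribution on $X$, $T$, and $F$ induced by the algorithm on the given joint distribution over targets and information resources. Applying the tower property by conditioning on $(T, F)$ gives
\[
\Pr(X \in T; \mathcal{A}) = \E_{T, F}\!\left[\Pr(X \in T \mid T, F)\right].
\]

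Next I would simplify the inner conditional probability. Fixing $T = t$ and $F = f$, the event $\{X \in T\}$ reduces to $\{X \in t\}$, which now depends only on $X$. The hypothesis $X \perp T \mid F$ then lets me drop the conditioning on $T$, so that
\[
\Pr(X \in t \mid T = t, F = f) = \Pr(X \in t \mid F = f).
\]
This is the step where the conditional independence does its work: it guarantees that the distribution from which $X$ is effectively drawn depends on the information resource alone, which is exactly what makes $\mathbf{P}_{\g, f}$ well defined as a quantity that is not a function of $t$.

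Finally I would invoke decomposability. By the definition of a \dm{}, $\Pr(X \in t \mid F = f) = P_\g(X \in t \mid f) = \mathbf{t}^\top \mathbf{P}_{\g, f} = \g(t, f)$. Substituting this identity back into the conditioned expectation yields
\[
\Pr(X \in T; \mathcal{A}) = \E_{T, F}\!\left[\g(T, F)\right],
\]
which is the desired conclusion.

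I expect the only genuine subtlety to be the bookkeeping around the event $\{X \in T\}$, since it depends on both the random target and the random sample. Care is needed to fix $T = t$ first, so that the indicator becomes a statement about $X$ alone, before applying $X \perp T \mid F$; applying the independence naively to the joint event would be ill-posed. Once that ordering is respected, the remainder is a direct chain of the tower property, the independence hypothesis, and the definitional identity supplied by decomposability, with no further estimates required.
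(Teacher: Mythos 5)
Your proposal is correct and follows essentially the same route as the paper's proof: both expand $\Pr(X \in T; \mathcal{A})$ as an expectation of the indicator $\1_{X \in T}$, apply the tower property to condition on $(T, F)$, use $X \perp T \mid F$ to drop $T$ from the conditioning, and then identify $\Pr(X \in t \mid F = f)$ with $P_\g(X \in t \mid f) = \g(t,f)$ via decomposability. The paper merely splits the iterated expectation into two nested stages (first over $T$, then over $F$ given $T$), and your remark about fixing $T = t$ before invoking the conditional independence is exactly the bookkeeping the paper's nested conditioning handles implicitly.
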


\begin{proof}
$\Pr(X \in T; \mathcal{A})$ is the probability that random variable $X$ is in target $T$ over all values of $F$, for random $T$ and for $X$ drawn from $P_\g (X|F)$. Then, 
\begin{align*}
    \Pr(X \in T; \mathcal{A}) &= \E_{T, X} [\1_{X \in T} \mid \mathcal{A}] \\
    &= \E_T[\E_X[\1_{X \in T} \mid T, \mathcal{A}]] \\
    &= \E_T[\E_F[\E_X[\1_{X \in T} \mid T, F, \mathcal{A}]\mid T]] \\
    &= \E_T[\E_F[\E_X[\1_{X \in T} \mid F, \mathcal{A}]\mid T]] \\
    &= \E_{T, F}[\E_X[\1_{X \in T} \mid F, \mathcal{A}]] \\
    &= \E_{T, F}[P_\g(X \in T | F)] \\
    &= \E_{T, F}[\g(T, F)],
\end{align*}
where the third equality makes use of the law of iterated expectation, the fourth follows from the
conditional independence assumption, and the final equality follows from the definition of decomposability.
\end{proof}

\DEPlem*
\begin{proof}
By definition,
\[
q(t,f) = \overline{P}(X \in t|f) = \mathbf{t}^\top \overline{\mathbf{P}}_f.
\]
\end{proof}

\DGPlem*
\begin{proof}
Observe that $P_{\alpha}(X \in t|f)  = \textbf{t}^\top \textbf{P}_{\alpha, f}$. Since $P(x)$ is a probability distribution, 
 \begin{align*}
        P_{\alpha}(X \in t|f) &= \sum_x \1_{x \in t} \int \mathbb{E}_{P\sim P_{\alpha}}[P(x)]\dif\nu(\tilde{P},h|f) \\
        &= \sum_x \1_{x \in t} \int \sum_{i=1}^{|\tilde{P}|} \alpha_i P_i(x)\dif\nu (\tilde{P},h|f)\\
        &= \int\left[ \sum_{i=1}^{|\tilde{P}|} \alpha_i \left(\sum_x \1_{x \in t} P_i(x)\right)\right]\dif\nu (\tilde{P},h|f)\\
        &= \mathbb{E}_{\tilde{P},H} \left[\sum_{i=1}^{|\tilde{P}|}\alpha_i P_i(x \in t) \bigg| f \right] \\
        &= q_{\alpha}(t,f).
\end{align*}
\end{proof}

\DCElem*
\begin{proof}
\begin{align*}
     \g'(t,f) = \E_{\mathcal{D}}[\g(t,f)] &= \sum_{i}\g_i(t,f)\mathcal{D}(\g_i)\\
     &= \sum_{i} \left(\textbf{t}^\top \textbf{P}_{\g_i,f}\right)\mathcal{D}(\g_i)\\
     &= \textbf{t}^\top  \sum_{i} \textbf{P}_{\g_i,f}\mathcal{D}(\g_i)\\
     &= \textbf{t}^\top \E_{\mathcal{D}}[\textbf{P}_{\g,f}]\\
     &= \textbf{t}^\top \overline{\textbf{P}}_{\g,f}.
\end{align*}
\end{proof}

\subsection{Proofs of Theorems}
\NFLthm*

\begin{proof}
Note that the closed under permutation condition implies $\sum_{\mathbf{t}\in\uptau}\mathbf{t} = [c, c,\ldots,c] = \mathbf{1}\cdot c$ for some constant $c$.

\begin{align*}
            \sum_{t\in\uptau}\sum_{f\in\mathcal{B}} \g_{\mathcal{A}_1}(t,f) 
                &= \sum_{\mathbf{t}\in\uptau}\sum_{f\in\mathcal{B}} \mathbf{P}_{\g, f,\mathcal{A}_1}^{\top}\mathbf{t}\\
                &= \sum_{f\in\mathcal{B}} \textbf{P}_{\g, f,\mathcal{A}_1}^{\top}\sum_{\mathbf{t}\in\uptau}\mathbf{t}\\
                &= \sum_{f\in\mathcal{B}} \textbf{P}_{\g, f,\mathcal{A}_1}^{\top}\mathbf{1} \cdot c\\
                &= c \sum_{f\in\mathcal{B}} \textbf{P}_{\g, f,\mathcal{A}_1}^{\top}\mathbf{1}\\
                &= c \sum_{f\in\mathcal{B}} 1 \\
                &= c \sum_{f\in\mathcal{B}} \textbf{P}_{\g, f,\mathcal{A}_2}^{\top}\mathbf{1}\\
                &= \sum_{\mathbf{t}\in\uptau}\sum_{f\in\mathcal{B}} \mathbf{P}_{\g, f,\mathcal{A}_2}^{\top}\mathbf{t}\\
                &= \sum_{t\in\uptau}\sum_{f\in\mathcal{B}}\g_{\mathcal{A}_2}(t,f)
        \end{align*}
where the first and final equalities follow from the definition of \dy. 
\end{proof}

\FOFT*
\begin{proof}
First, by the definition of active information of expectations, $I_{\g(T,F)} \geq b$ implies $|T| \leq \frac{|\Omega|}{2^b}$, since $\g(T,F) \leq 1$. Thus, 
\begin{align}
    \uptau_b &\subseteq \uptau'_b = \left\{T \mid T \subseteq \Omega, 1 \leq |T| \leq \frac{|\Omega|}{2^b} \right\}.
\end{align}

For $|\Omega| < 2^b$ and $I_{\g(T,F)} \geq b$, we have $|T| < 1$ for all elements of $\uptau'_b$ (making the set empty) and the theorem follows immediately. Thus, $|\Omega| \geq 2^b$ for the remainder. 

By Lemma~\ref{BINOMIAL-APPROX}, we have
\begin{align}
        \frac{|\uptau_b|}{|\uptau|} &\leq \frac{|\uptau'_b|}{|\uptau|}\\
        &= 2^{-|\Omega|}\sum_{k=0}^{\left\lfloor\frac{|\Omega|}{2^b}\right\rfloor}\binom{|\Omega|}{k}\\
        &\leq 2^{-|\Omega|}2^{|\Omega| - b}\\
        &= 2^{-b}. 
    \end{align}
\end{proof}

\FMOFT*
\begin{proof}
Let $\mathcal{S} = \{\mathbf{s} : \mathbf{s} \in \{0, 1\}^{|\Omega|}, \|\mathbf{s}\| = \sqrt{k}\}$. For brevity, we will allow $\mathbf{s}$ to also denote its corresponding target set, letting the context make clear whether the target set or target function is meant. Then,
\begin{align*}
    \frac{|\uptau_{q_{\text{min}}}|}{|\uptau|} &= \frac{\sum_{\mathbf{s} \in \mathcal{S}} \1_{\g(\mathbf{s}, f) \geq q_\text{min}} }{\binom{|\Omega|}{k}} \\
    &= \binom{|\Omega|}{k}^{-1}\sum_{\mathbf{s} \in \mathcal{S}} \1_{\g(\mathbf{s}, f) \geq q_\text{min}} \\
    &= \E_{\mathcal{U}[\mathcal{S}]}\left[\1_{\g(\mathbf{S}, f) \geq q_\text{min}} \right]\\
    &= \Pr(\g(\mathbf{S}, f) \geq q_\text{min})\\
    &\leq \frac{\E_{\mathcal{U}[\mathcal{S}]}[\g(\mathbf{S}, f)]}{q_\text{min}},
\end{align*}
where the final step follows from Markov's inequality. By decomposability of $\g$ and linearity of expectation, we have
\begin{align*}
     \frac{\E_{\mathcal{U}[\mathcal{S}]}[\g(\mathbf{S}, f)]}{q_\text{min}} &= \frac{\E_{\mathcal{U}[\mathcal{S}]}[\mathbf{S}^{\top}\textbf{P}_{\g,f}]}{q_\text{min}}\\
    &= \frac{\textbf{P}_{\g,f}^{\top}\E_{\mathcal{U}[\mathcal{S}]}[\mathbf{S}]}{q_\text{min}}\\
    &= \frac{\textbf{P}_{\g,f}^{\top}\mathbf{1}\left[\binom{|\Omega|}{k}^{-1}\binom{|\Omega| - 1}{k -1} \right]}{q_\text{min}}\\
    &= \frac{k}{|\Omega|}\frac{\textbf{P}_{\g,f}^{\top}\mathbf{1}}{q_\text{min}}\\
    &= \frac{p}{q_\text{min}}.
\end{align*}

\end{proof}

\FOF*
\begin{proof}
We begin by defining a set $\mathcal{S}$ of all $|\Omega|$-length target functions with exactly $k$ ones, namely, $\mathcal{S} = \{\mathbf{s} : \mathbf{s} \in \{0, 1\}^{|\Omega|}, \|\mathbf{s}\| = \sqrt{k}\}$. As in Theorem~\ref{thm:FMOFT}, we again allow $\mathbf{s}$ to also denote its corresponding target set. For each of these $\mathbf{s}$, we have $|\mathcal{B}_m|$ information resources. The total number of search problems is therefore
\begin{align}\label{NUMBER-OF-SEARCH-PROBLEMS}
    \binom{|\Omega|}{k}|\mathcal{B}_m|.
\end{align}
We seek to bound the proportion of possible search problems for which $\g(\mathbf{s}, f) \geq q_\text{min}$ for any threshold $q_\text{min} \in (0, 1]$. Thus,
\begin{align}
    \frac{|R_{q_{\text{min}}}|}{|R|} &\leq \frac{|\mathcal{B}_m| \underset{f}{\text{ sup}} \left[\sum_{\mathbf{s} \in \mathcal{S}} \1_{\g(\mathbf{s}, f) \geq q_\text{min}}\right] }{|\mathcal{B}_m|\binom{|\Omega|}{k}} \\
                  &= \binom{|\Omega|}{k}^{-1}\sum_{\mathbf{s} \in \mathcal{S}} \1_{\g(\mathbf{s}, f^*) \geq q_\text{min}}
\end{align}
where $f^* \in \mathcal{B}_m$ denotes the arg sup of the expression. Therefore,
\begin{align*}
    \frac{|R_{q_{\text{min}}}|}{|R|} &\leq \binom{|\Omega|}{k}^{-1}\sum_{\mathbf{s} \in \mathcal{S}} \1_{\g(\mathbf{s}, f^*) \geq q_\text{min}} \\
        &= \binom{|\Omega|}{k}^{-1}\sum_{\mathbf{s} \in \mathcal{S}} \1_{\mathbf{s}^\top \textbf{P}_{\g, f^*} \geq q_\text{min}}
\end{align*}
where the equality follows decomposability of $\g(\mathbf{s}, f^*)$ and $\textbf{P}_{\g, f^*}$ represents the $|\Omega|$-length probability vector defined by $P_\g(\cdot|f^*)$. By Lemma~\ref{SUBSET-SELECTION-LEMMA}, we have
\begin{align}
\binom{|\Omega|}{k}^{-1}\sum_{\mathbf{s} \in \mathcal{S}} \1_{\mathbf{s}^\top \mathbf{P}_{\g, f^*} \geq q_\text{min}} 
&\leq \binom{|\Omega|}{k}^{-1} \left[\frac{1}{q_\text{min}}\binom{|\Omega|-1}{k-1}\right] \notag{}\\
&= \frac{k}{|\Omega|}\frac{1}{q_\text{min}} \notag{}\\
&= p / q_\text{min}\label{eq:final-bound-thm-pgsp}
\end{align}
proving the result for finite information resources.

\end{proof}

\LUD*

\begin{proof}
This proof loosely follows that of Fano's Inequality~\cite{fano1961transmission}, being a reversed generalization of it. Let $Z = \1(\X \in T)$. Using the chain rule for entropy to expand $H(Z, T | \X)$ in two different ways, we get
\begin{align*}
    H(Z, T | \X)  &= H(Z|T, \X) + H(T|\X) \\
                           &= H(T|Z, \X) + H(Z|\X).
\end{align*}
By definition, $H(Z|T, \X) = 0$, and by the data processing inequality $H(T|F) \leq H(T|\X)$. Thus,
\begin{align*}
    H(T | F)  &\leq H(T|Z, \X) + H(Z|\X).
\end{align*}
Define $P_g = \Pr(\X \in T ; \mathcal{A}) = P(Z=1)$. Then,
\begin{align*}
H(T|Z, \X) &= (1-P_g)H(T|Z=0, \X) + P_gH(T|Z=1, \X) \\
                    &\leq (1-P_g)\log\binom{|\Omega|}{k} + P_g\log\binom{|\Omega|-1}{k-1} \\
                    &= \log\binom{|\Omega|}{k} - P_g\log\frac{|\Omega|}{k}.
\end{align*}
We let $H(\mathcal{U}_T) = \log \binom{|\Omega|}{k}$, being the entropy of the uniform distribution over $k$-sparse target sets in $\Omega$. Therefore,
\begin{align*}
    H(T|F) &\leq H(\mathcal{U}_T) - P_g\log\frac{|\Omega|}{k} + H(Z|\X).
\end{align*}
Using the definitions of conditional entropy and $I_{\Omega}$, we get
\begin{align*}
    H(T) - I(T; F) &\leq H(\mathcal{U}_T) - P_gI_{\Omega} + H(Z|\X),
\end{align*}
which implies
\begin{align*}
     P_gI_{\Omega} &\leq I(T; F) + H(\mathcal{U}_T) - H(T) + H(Z|\X) \\
        &= I(T; F) + D(P_T \| \mathcal{U}_T) + H(Z|\X).
\end{align*}
Examining $H(Z|\X)$, we see it captures how much entropy of $Z$ is due to the randomness of $T$.
\ifx 
To see this, imagine $\Omega$ is a roulette wheel and we place our bet on $\X$. Target elements are ``chosen'' as balls land on random slots, according to the distribution on $T$. When a ball lands on $\X$ as often as not (roughly half the time), this quantity is maximized. Thus, this entropy captures the contribution of dumb luck, being averaged over all $\X$. (When balls move towards always landing on $\X$, something other than luck is at work.) 
\fi 
We upperbound this by its maximum value of 1 and obtain
\begin{align*}
     \Pr(\X \in T ; \mathcal{A}) &\leq \frac{I(T; F) + D(P_T \| \mathcal{U}_T) + 1}{I_{\Omega}},
\end{align*}
and substitute $q$ for $\Pr(\X \in T ; \mathcal{A})$ to obtain the first result, noting that $q = \E_{T, F}\left[\;P_\g(\omega \in T | F) \right]$ specifies a proper probability distribution by the linearity and boundedness of the expectation. To obtain the second form, use the definitions $I(T; F) = H(T) - H(T|F)$ and $D(P_T \| \mathcal{U}_T) = H(\mathcal{U}_T) - H(T)$.
\end{proof}

\fofir*
\begin{proof}
        We seek to bound the proportion of successful search problems for which $\g(t, f) \geq q_{\mathrm{min}}$ for any threshold $q_{\mathrm{min}} \in (0, 1]$. Let $F \sim \mathcal{U}[\mathcal{B}]$. Then, 
        \begin{align*}
            \frac{|\mathcal{B}_{q_{\mathrm{min}}}|}{|\mathcal{B}|} &= \frac{1}{ |\mathcal{B}|} \sum_{f \in \mathcal{B}} \mathds{1}_{\g(t,f) \geq q_{\mathrm{min}}}\\
                                               &=  \mathbb{E}_{\mathcal{U}[\mathcal{B}]}[\mathds{1}_{\g(t,F) \geq q_{\mathrm{min}}}] \\
                                               &= \Pr(\g(t, F) \geq q_{\mathrm{min}}).
        \end{align*}
        By \dy, we have
        \begin{align*}
            \frac{|\mathcal{B}_{q_{\mathrm{min}}}|}{|\mathcal{B}|} &= \Pr(\mathbf{t}^{\top} \mathbf{P}_{\g, F} \geq q_\mathrm{min}).
        \end{align*}
        Applying Markov's Inequality and by the definition of $\bias(\mathcal{B}, \mathbf{t})$, we obtain
        \begin{align*}
            \frac{|\mathcal{B}_{q_{\mathrm{min}}}|}{|\mathcal{B}|} &\leq \frac{\mathbb{E}_{\mathcal{U}[\mathcal{B}]} [\mathbf{t}^{\top} \textbf{P}_{\g, F}]}{q_{\mathrm{min}}} \\
                                               &= \frac{p + \bias(\mathcal{B},\mathbf{t})}{q_{\mathrm{min}}}.
        \end{align*}
    \end{proof}

\futility*
\begin{proof}
        Let $\mathcal{F}$ be the space of possible information resources. Then,
        \begin{align*}
            \Pr(\omega \in t; \mathcal{A}) 
                &= \int_\mathcal{F} \Pr(\omega \in t, f; \mathcal{A}) \dif f\\
                &= \int_\mathcal{F} \Pr(\omega \in t \mid f; \mathcal{A})\Pr(f) \dif f.
        \end{align*}
        Since we are considering the per-query probability of success for algorithm $\mathcal{A}$ on $t$ using information resource $f$, we have
        \[
          \Pr(\omega \in t \mid f; \mathcal{A}) = P_\g(\omega \in t \mid f).
        \]
        Also note that $\Pr(f) = \mathcal{D}(f)$ by the fact that $F \sim \mathcal{D}$. Making these substitutions, we obtain
        \begin{align*}
            \Pr(\omega \in t; \mathcal{A}) 
                &= \int_\mathcal{F} P_\g(\omega \in t \mid f)\mathcal{D}(f) \dif f\\
                &= \mathbb{E}_{\mathcal{D}}\left[P_\g(\omega \in t \mid F)\right]\\
                &= \mathbb{E}_{\mathcal{D}}\left[\mathbf{t}^{\top}\textbf{P}_{\g,F}\right]\\
                &= \bias(\mathcal{D}, \bm{t}) + p\\
                &= p.
        \end{align*}
    \end{proof}

\fofbd*
\begin{proof}
This result follows from \M's \cite{montanez2019fobfl} proof of the Famine of Favorable Biasing Distributions but instead using the generalized form of bias. No other changes to the proof are needed.
\end{proof}

\end{document}